\title{Compressed Deep Networks: Goodbye SVD, Hello Robust Low-Rank Approximation}
\author{Murad Tukan\textsuperscript{\rm 1}\thanks{These authors contributed equally to this work.}, Alaa Maalouf\textsuperscript{\rm 1}\printfnsymbol{1}, Matan Weksler\textsuperscript{\rm 2}\printfnsymbol{1}, Dan Feldman\textsuperscript{\rm 1}\\
}
\newcolumntype{P}[1]{>{\centering\arraybackslash}p{#1}}
\newcommand{\printfnsymbol}[1]{%
  \textsuperscript{\@fnsymbol{#1}}%
}
\begin{document}

\maketitle

\begin{abstract}
A common technique for compressing a neural network is to compute the $k$-rank $\ell_2$ approximation $A_{k,2}$ of the matrix $A\in\mathbb{R}^{n\times d}$ that corresponds to a fully connected layer (or embedding layer).
Here, $d$ is the number of the neurons in the layer, $n$ is the number in the next one, and $A_{k,2}$ can be stored in $O((n+d)k)$ memory instead of $O(nd)$.

This $\ell_2$-approximation minimizes the sum over every entry to the power of $p=2$ in the matrix $A - A_{k,2}$, among every matrix $A_{k,2}\in\mathbb{R}^{n\times d}$ whose rank is $k$. While it can be computed efficiently via SVD, the $\ell_2$-approximation is known to be very sensitive to outliers (``far-away" rows).
Hence, machine learning uses e.g. Lasso Regression, $\ell_1$-regularization, and $\ell_1$-SVM that use the $\ell_1$-norm.

This paper suggests to replace the $k$-rank $\ell_2$ approximation by $\ell_p$, for $p\in [1,2]$.
We then provide practical and provable approximation algorithms to compute it for any $p\geq1$, based on modern techniques in computational geometry.

Extensive experimental results on the GLUE benchmark for compressing BERT, DistilBERT, XLNet, and RoBERTa confirm this theoretical advantage. For example, our approach achieves $28\%$ compression of RoBERTa's embedding layer with only $0.63\%$ additive drop in the accuracy (without fine-tuning) in average over all tasks in GLUE, compared to $11\%$ drop using the existing $\ell_2$-approximation. Open code is provided for reproducing and extending our results.
\end{abstract}

\section{Background} \label{sec:Background}
We first give background regarding network compression, the approach of low-rank approximation, and the limitation of the common $\ell_2$-approximation.

\paragraph{Network Compression.}
Deep Learning revolutionized Machine Learning by improving the accuracy by dozens of percents for fundamental tasks in Natural Language Processing (NLP), Speech/Image recognition and many more. One of the disadvantages of Deep Learning is that in many cases, the classifier is extremely large compared to classical machine learning models. An extreme example is RoBERTa which achieves state-of-the-art results for various tasks in NLP but consists of more than $125$ millions parameters.

Large network usually requires expensive and stronger resources due to: \begin{enumerate*}[label=(\roman*)] \item slower classification time, which may be a serious limitation, especially in real-time systems such as autonomous cars, or real-time text/speech translations, \item large memory requirement, which makes it infeasible to store the network on RAM or on a device such as IoT/smartphones, and \item high energy consumption which is related to the CPU/GPU time of each classification and requires larger batteries with shorter lifespan.
\end{enumerate*}


\paragraph{Embedding matrix. }
One of the most common approaches for compressing neural networks is to treat a layer in the network as a matrix operation and then to approximate this matrix by its compressed version. This is especially relevant in a fully connected layer. Specifically, in word embedding, this layer is called \emph{embedding layer}, which is defined by the following matrix.

The input of the embedding layer consists of $d$ input neurons, and the output has $n$ neurons. The $nd$ edges between these layers define a matrix $A\in\REAL^{n\times d}$. Here, the entry $A_{i,j}$ in the $i$th row and $j$th column of $A$ is equal to the weight of the edge between the $j$th input neuron to the $i$th output neuron. Suppose that a test sample (vector) $x\in\REAL^d$ is received as an input. The corresponding output $n$-dimensional vector is thus $y=Ax$. To simply read a column from $A$ during training, a standard vector $x$ (a row of the identity matrix) is used and is called \emph{one-hot vector}.

\paragraph{$\ell_2$ $k$-Rank Approximation.} One of the natural and common matrix approximations, including in the context of network compression, is the $\ell_2$ $k$-rank approximation, see e.g.~\cite{yu2017compressing,acharya2019online} and references therein. This is the matrix which minimizes the Frobenius norm, i.e., the sum of squared distances
$\norm{A-A_k}^2_F:=\sum_{i=1}^n \norm{A^{(i)}-A_k^{(i)}}_2^2$ between the $i$th row $A^{(i)}$ in $A$ and its corresponding row $A_k^{(i)}$ in $A_k$, over every rank $k$ matrix $A_k$. It can be easily computed via the Singular Value Decomposition (SVD) in $O(\min\br{nd^2,dn^2})$ time. Although $A_k$ has the same size as $A$, due to its low rank, it can be factorized as $A_k=UW$ where $U\in \REAL^{n\times k}$ and $W\in \REAL^{k\times d}$. We can then replace the original embedding layer that corresponds to $A$ by a pair of layers that correspond to $U$ and $W$. Storing $U$ and $W$ takes memory of $O(k(n+d))$ numbers, compared to the $O(nd)$ entries in $A$. Moreover, the computation of the output $y_k:=A_kx$  takes $O(k(n+d))$ time, compared to the $O(nd)$ time that it takes to compute $Ax$.

\paragraph{Fine tuning.} The layers that corresponds to the matrices $U$ and $W$ above are sometimes used only as initial seeds for a training process that is called \emph{fine tuning}. Here, the training data is fed into the network, and unlike the $\ell_2$ error, the error is measured with respect to the final classification. Hence, the structure of the data remains the same but the edges are updated in each iteration.

\paragraph{Application to fully connected layer.} Rank approximation can be computed independently, for every fully connected layer in a network. In this case, an activation function $f:\REAL^d\to\REAL$ is applied on the output $y=Ax$ or each of its coordinates (as Relu) to obtain $f(Ax)$. The hope is that, since $A_k$ approximates $A$ in some sense, the value $f(A_kx)$ would also approximate $f(Ax)$ for the next layers. The matrix $A_k$ is replaced by two smaller layers $U,W$ as explained above.

\section{Motivation}
\label{sec:motiv}
Geometrically, each row of $A$ corresponds to a $d$-dimensional vector (point) in $\REAL^d$, and the corresponding row in $A_k$ is its projection on a $k$-dimensional subspace of $\REAL^d$. This subspace (which is the column space of $U$ above) minimizes the sum of squared distances to the rows of $A$ over every $k$-subspace in $\REAL^d$.

Statistically, if these $n$ points were generated by adding a Gaussian noise to a set of $n$ points on a $k$-dimensional subspace, then it is easy to prove that most likely (in the sense of Maximum-Likelihood) this subspace is $U$.

The disadvantage of $\ell_2$ $k$-rank approximation is that it is optimal under the above statistical assumption, which rarely seems to be the case for most applications.

In particular, minimizing the sum of squared distances is heavily sensitive to outliers~\cite{bermejo2001oriented}; see Figure~\ref{fig:l1vsl2illus}. As explained in~\cite{wiki:Mean_squared_error}, this is the result of squaring each term, which effectively weights large errors more heavily than small ones.

This undesirable property, in many applications, has led researchers to use alternatives such as the mean absolute error (MAD), that minimizes the $\ell_1$ (sum of distances) of the error vector. For example, Compressed Sensing~\cite{donoho2006compressed} uses $\ell_1$ approximation as its main tool to clean corrupted data~\cite{huang2015two} as well as to obtain sparsified embeddings with provable guarantees as explained e.g. in~\cite{donoho2003optimally}.

In machine learning, the $\ell_1$-approximation replaces, or at least is combined, with the $\ell_2$ approximation.
Examples in scikit-learn include Lasso Regression, Elastic-Nets, or MAD error in decision trees~\cite{scikit-learn}.

\paragraph{Novel approach: Subspace Approximation meets Deep Learning.}
In the context of embedding layers, it is then natural to generalize the above $\ell_2$ approximation to $\ell_1$ $k$-rank approximation, or $\ell_p$ approximation for more general $p<2$. Geometrically, we wish to compute the $k$-subspace that minimizes the sum of (non-squared) distances to the given set of $n$ points, or the distances to the power of $p$.
According to the theory of robust statistics this should result in more accurate compressed networks that are more robust to outliers and classifications mistakes.

The main challenge is that computing the $k$-subspace that minimizes the sum of each distance to the power of $p$ is a hard problem as explained in~\cite{clarkson2015input}. In the recent decade, especially in the recent years, there was a great progress in this area, with many provable approximations. However, these approximation are either impractical or based on ad-hoc heuristics with no provable bounds. This may be related to the fact that the $\ell_p$ $k$-rank approximation problem is NP-hard problem for general (including constant) values of $p$; see Section~\ref{related}.

This motivates the questions that are answered affirmably in this paper:
\textbf{Can we obtain smaller networks with higher accuracy by minimizing sum of non-squared errors, or any other power $p \neq 2$ of distances, instead of the $\ell_2$ $k$-rank approximation via SVD?}\\
\textbf{Can we efficiently compute the corresponding $k$-rank approximation matrix $A_k$ for these cases, similarly to SVD?}

\section{Our Contribution}
Our results are based on a novel combination between modern techniques in computational geometry and applied deep learning.
We expect that future papers would extend this approach as suggested in Section~\ref{sec:futurework}.

The applied results, which were the main motivations for this papers are:
\begin{enumerate}
\item New approach for compressing networks based on $k$-rank $\ell_p$-approximation instead of $\ell_2$, for $p\in [1,\infty)$. The main motivation is the robustness to outliers and noise, which is supported by many theoretical justifications.
\item Provable algorithms for computing this $\ell_p$-approximation of every $n\times d$ matrix $A$. The deterministic version takes time $O(nd^5\log{n})$ and the randomized version takes $O(nd\log{n})$.
The approximation factor depends polynomially on $d$ and independent of $n$ for the deterministic version, and only poly-logarithmic in $n$ for the randomized version.
\item Experimental results that confirm the efficiency of our approach. For example we achieve $28\%$ compression of RoBERTa's embedding layer with only $0.63\%$ additive drop in the accuracy (without fine-tuning) in average across the tasks in GLUE. Also, we succeed to further compress DistilBERT's embedding layer by $15\%$, while paying $2.28\%$ average drop in accuracy. Full open code is provided~\cite{opencode}.
\end{enumerate}

\paragraph{Main challenges. }The empirical success of the $\ell_p$-approximation for $p<2$, compared to $\ell_2$ is not surprising, given the analysis and numerous practical examples in other fields.
However, unlike the case $p=2$, which was solved more than a century ago~\cite{eckart1936approximation} via SVD and its variants, the $\ell_p$ approximation was recently proved to be NP-hard even to approximate up to a factor of $1 + 1/\mathrm{poly}(d)$ for $p\in [1,2)$~\cite{clarkson2015input}.
This is why in the recent years many algorithms were suggested in theoretical computer science for $\ell_p$ approximation, but their running time is exponential in $k$~\cite{feldman2011unified, clarkson2015input} and their efficiency in practice is not clear. Indeed, we could not find implementations of such provable algorithms.

To obtain efficient implementations with provable guarantees, we suggest a lee-way by allowing the approximation factor to be larger than $k$, instead of aiming for ($1+\eps)$-approximation (PTAS).
In practice, this worst-case bound seems to be too pessimistic and the empirical approximation error in our experiments is much smaller. This phenomenon is common in approximation algorithms, especially in deep learning, when the data-set has a lot of structure and is very different from synthetic worse-case artificial examples.

The main mathematical tool that we use is the L\"{o}wner Ellipsoid which generalizes the SVD case to general $\ell_p$ cases, inspired by many papers in the related work below.

\section{Related work}\label{related}
In many NLP tasks, it is very common to learn representations of natural language \cite{mikolov2013distributed,radford2018improving,le2014distributed,peters-etal-2018-deep,radford2019language}. Lately, it was shown that full-network pre-training followed by fine-tuning for a specific task gives better results than training a model from scratch~\cite{dai2015semi,radford2018improving,devlin-etal-2019-bert}. However, those pre-trained models are very large, and hard to be stored, evaluated, and trained in weak devices, e.g.,~\cite{devlin-etal-2019-bert}, show that for many NLP tasks, a network consists of $340$ million parameters achieved better results by making the hidden size larger, adding more hidden layers, and more attention heads. This work was followed by others e.g,~\cite{liu2019RoBERTa} and \cite{yang2019XLNet}.
Those large state-of-the-art models consist of hundreds of millions or even billions of parameters, which leads to GPU/TPU memory limitations in the training (fine-tuning) procedure, and storage and speed/performance issues in the evaluation (real-time usage) part.

In the context of training such giant models, some interesting approaches were suggested to reduce the memory requirement, e.g.,~\cite{chen2016training} and~\cite{gomez2017reversible}. However, those methods reduce the memory requirement at the cost of speed/performance. Later,~\cite{raffel2019exploring} proposed a way to train large models based on parallelization.
Also here, the model size and evaluation speed are still an obstacle.

\paragraph{Compressed Networks. }To this end, many papers were dedicated to the purpose of compressing neural networks in the field of NLP. Those paper are based on different approaches such as pruning~\cite{mccarley2019pruning,michel2019sixteen,fan2019reducing,guo2019reweighted,gordon2020compressing}, quantization~\cite{zafrirq8bert,shen2020q}, knowledge distillation~\cite{zhao2019extreme,sanhdistilbert,tang2019distilling,mukherjee2019distilling,liu2019attentive,sun-etal-2019-patient,jiao2019tinybert,sun-etal-2020-mobilebert}, weight sharing~\cite{lan2019albert}, and low-rank factorization~\cite{wang2019structured,lan2019albert}; see example table at~\cite{gordon_2019} for compressing BERT model.
There is no convention on which approach from the above should be used. However, recent works e.g.,~\cite{lan2019albert} showed that combining such approaches yields good results.
In our work, we suggest a low-rank factorization technique for compressing the embedding layer, e.g., of BERT. This is motivated by the fact that in many networks, $20\%-40\%$ of the network size is hidden in the word embedding matrix (often called the ``lookup table''). This technique can be combined with previous known works to obtain better compression. For example, a remarkable result of~\cite{sanhdistilbert} that is based on knowledge distillation reduces the size of a BERT model by $40\%$, while maintaining $97\%$ of its language understanding capabilities and being $60\%$ faster. However, this result does not use low-rank factorization to compress the embedding layer. We show how we can compress this network even more, and achieve better accuracy than the other factorization techniques.

\paragraph{Subspace approximation.}
The $k$-rank $\ell_2$ approximation can be solved easily in $\min\br{nd^2,d^2n}$ time.
A $(1+\eps)$ approximation can be computed deterministically in $nd(k/\eps)^{O(1)}$ time~\cite{cohen2015optimal} for every $\eps>0$, and a randomized version takes $O(\nnz{A}) + (n+d)\cdot (k/\eps)^{O(1)}$ time, where $\nnz{A}$ is the number of non-zeros entries in $A$~\cite{clarkson2017low,meng2013low,nelson2013osnap}. These and many of the following results are summarized in the seminal work at~\cite{clarkson2015input}.
However, for $p\neq 2$ even computing a multiplicative $(1+\eps)$-approximation is NP-hard when $k$ is part of the input~\cite{clarkson2015input}.
Nevertheless, it is an active research area, especially in the recent years, where techniques from computational geometry are frequently used.
The case $p \geq 1$, was introduced in the theory community at~\cite{shyamalkumar2007efficient}, and earlier the case $p = 1$ in the machine learning community by~\cite{ding2006r}.
At~\cite{shyamalkumar2007efficient}, a randomized algorithm for any $p \geq 1$ that runs in time
$nd 2^{(k/\eps)^{O(p)}}$ was suggested. The state of the art for $p\in [1,2)$ at~\cite{clarkson2015input} takes $O(\nnz{A}+(n+d)(k/\eps)^{O(1)}+2^{((k/\eps)^{O(1)})})$ time.

The time complexity for $p = 1$ was improved in~\cite{feldman2010coresets} to
$nd \cdot (k/\eps)^{O(1)} + (n+d) \cdot 2^{(k/\eps)^{O(1)}}$, and
later for general $p$ to $nd \cdot (k/\eps)^{O(1)} + 2^((k/\eps)^{O(p)})$~\cite{feldman2011unified}. The latter work, together with~\cite{varadarajan2012sensitivity}, also
gives a {\it strong coreset} for {Subspace Approximation}, i.e., a way of reducing the number of rows
of $A$ so as to obtain a matrix $A'$ so that the cost of fitting the rows of $A'$ to any $k$-dimensional
subspace $F$ is within a $1+\eps$ factor of the cost of fitting the rows of $A$ to $F$; for $p=2$ such coresets were known~\cite{feldman2016dimensionality,maalouf2020tight,clarkson2017low,maalouf2020faster} and can be computed exactly $(\eps=0)$~\cite{maalouf2019fast,jubran2019introduction}.

\paragraph{Efficient approximations. }
The exponential dependency on $k$ and hardness results may explain why we could not find (even inefficient) open or closed code implementations on the web.
To our knowledge, it is an open problem to compute larger factor approximations ($\eps\in O(1)$) in time polynomial in $k$, even in theory. The goal of this paper is to provide such provable approximation in time that is near-linear in $n$ with practical implementation, and to demonstrate our usefulness in compressed networks.

\begin{figure}[t]
    \includegraphics[width=0.49\textwidth]{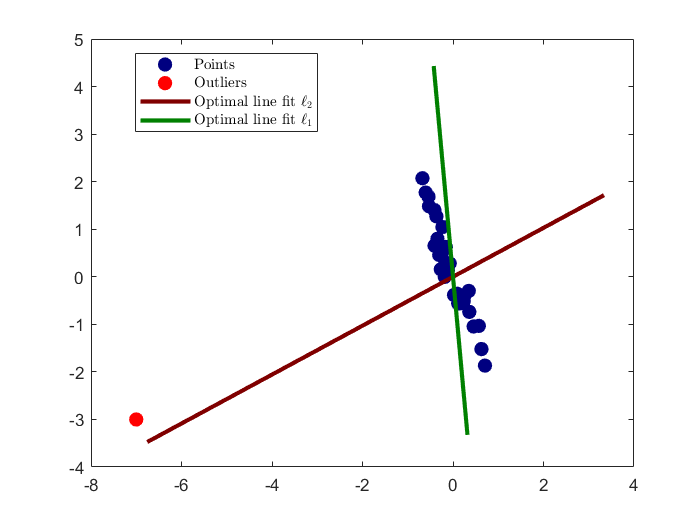}
    \caption{$\ell_1$-low rank approximation versus $\ell_2$-low rank approximation: Since the norm of a vector increases as the base of the norm decreases, the optimization problem becomes less susceptible towards outliers in the data as presented above.}
    \label{fig:l1vsl2illus}
\end{figure}

\section{Preliminaries}
\paragraph{Notations.} For a pair of integers $n,d\geq 1$, we denote by $\REAL^{n\times d}$ the set of all $n \times d$ real matrices, by $I_d \in \br{0,1}^{d \times d}$ the identity matrix, and $[n] = \br{1, \cdots, n}$.
For a vector $x \in \REAL^d$, a matrix $A \in \REAL^{n \times d}$, and a real number $p > 0$, the $p$th norm of $x$ is defined as $\norm{x}_p = \term{\sum_{i=1}^d \abs{x_i}^p}^{1/p}$, and the $\ell_p$ entry-wise norm of $A$ is defined as
$\norm{A}_{p,p} = \term{\sum_{i=1}^d \norm{Ae_i}_p^p}^{1/p}$,
where $e_i \in \br{0,1}^d$ is a vector whose $i$th entry is $1$ and $0$'s elsewhere.
We say that the columns of a matrix $A \in \REAL^{n \times d}$ (where $n \geq d$) are orthogonal if $A^TA = I_{d}$.
In addition, a matrix $F \in \REAL^{d \times d}$ is called positive definite matrix if  $F$ is a symmetric matrix, and for every $x \in \REAL^d$ such that $\norm{x}_2 > 0$, we have $x^T F x > 0$.
Furthermore, we say that a set $L \subseteq \REAL^d$ is centrally symmetric if for every $x \in L$, it holds that $-x \in L$.
Finally, a set $L \subseteq \REAL^d$ is called a convex set if for every $x, y \in L$ and $\theta \in [0,1]$,  $\theta x + (1-\theta) y \in L$.

\newcommand{\f}{\norm{\cdot}_p}
\subsection{$\f$-SVD Factorization and the L\"{o}wner ellipsoid}
In this section, we describe intuitively and formally the tools that will be used throughout the paper. Definition~\ref{def:fsvd} is based on~\cite[Definition 4]{tukan2020coresets}, while the latter defines a generic factorization for a wide family of functions, Definition~\ref{def:fsvd} focuses on our case, i.e., the function we wish to factorize is $\norm{Ax}_p$ for any $p \geq 1$, where $A\in \REAL^{n\times d}$ is the input matrix and $x$ is any vector in $\REAL^d$.

\begin{definition}[Variant of Definition 4~\cite{tukan2020coresets}]
\label{def:fsvd}
Let $A \in \REAL^{n \times d}$ be a matrix of rank $d$, and let $p \geq 1$ be a real number. Suppose that there is a diagonal matrix $D \in (0, \infty)^{d \times d}$ of rank $d$, and an orthogonal matrix $V \in \REAL^{d \times d}$, such that for every $x \in \REAL^d$,
$$
\norm{DV^Tx}_2^p \leq \norm{Ax}_p^p \leq d^{\frac{p}{2}} \norm{DV^Tx}_2^p.
$$
Define $U = A \term{D V^T}^{-1}$. Then, $UDV^T=A$ is called the $\f$-SVD of $A$.

\end{definition}

\paragraph{Why $\f$-SVD?}
The idea behind using the $\f$-SVD factorization of an input matrix $A$, is that we obtain a way to approximate the span of the column space of $A \in \REAL^{n \times d}$. This allows us to approximate the dot product $Ax$ for any $x \in \REAL^d$, which implies an approximation for the optimal solution of the $\ell_p$ low rank approximation problem.

For example, in the case of $p = 2$, the $\norm{\cdot}_2$-SVD of a matrix $A\in \REAL^{n \times d}$ is equivalent to the known SVD factorization $A=UDV^T$. This holds due to the fact that the columns of the matrix $U$ are orthogonal, and for every $x\in \REAL^d$ we have
$$\norm{Ax}_2^2 = \norm{UDV^Tx}_2^2 =\norm{DV^Tx}_2^2.$$

As for the general case of any $p \geq 1$,~\cite{tukan2020coresets} showed that the $\f$-SVD factorization always exists, and can be obtained using the \emph{L\"{o}wner} ellipsoid.

\begin{theorem}[Variant of Theorem III~\cite{john2014extremum}]
\label{thm:lowner}
Let $D \in [0, \infty)^{d \times d}$ be a diagonal matrix of full rank and an orthogonal matrix $V \in \REAL^{d \times d}$, and let $E$ be an ellipsoid defined as
\[
E = \br{x \middle| x \in \REAL^d, x^TVD^TDV^Tx \leq 1}.
\]

Let $L$ be a centrally symmetric compact convex set. Then there exists a unique ellipsoid $E$ called the \emph{L\"{o}wner ellipsoid} of $L$ such that
\[
1/\sqrt{d} E \subseteq L \subseteq E,
\]
where $1/\sqrt{d}E = \br{1/\sqrt{d} x \middle| x \in E}$.
\end{theorem}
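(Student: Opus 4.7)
The plan is to establish the theorem in three stages: existence of a minimum-volume origin-symmetric ellipsoid enclosing $L$, its uniqueness, and the tight inclusion $E/\sqrt{d} \subseteq L$ that uses the central symmetry of $L$.

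For existence and uniqueness I would parametrize origin-symmetric ellipsoids by positive-definite matrices $M$ via $E_M = \{x \in \mathbb{R}^d : x^T M x \leq 1\}$, so that $\mathrm{vol}(E_M)$ is proportional to $(\det M)^{-1/2}$. The feasible set $\mathcal{F}$ of positive-definite $M$ with $L \subseteq E_M$ is closed; using compactness of $L$ together with the fact that $L$ has nonempty interior in its affine hull (to which one may restrict if needed), elements of $\mathcal{F}$ are bounded away from singular matrices and have bounded determinant, so a standard compactness argument produces a maximizer $M^*$ of $\det M$, yielding a minimum-volume enclosing ellipsoid $E^*$. For uniqueness, suppose $E_{M_1}$ and $E_{M_2}$ both attain the minimum volume. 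Then $L \subseteq E_{(M_1+M_2)/2}$, because $x^T M_i x \leq 1$ for $i=1,2$ forces $x^T((M_1+M_2)/2)x \leq 1$. Minkowski's determinant inequality combined with AM--GM gives $\det((M_1+M_2)/2) \geq \sqrt{\det M_1 \cdot \det M_2}$, with equality iff $M_1 = M_2$; the minimum-volume assumption forces equality, so $M_1 = M_2$.

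For the $1/\sqrt{d}$ containment, I would first apply the affine change of coordinates $y = D V^T x$ so that WLOG $E$ is the unit Euclidean ball $B_2^d$ and $B_2^d$ itself must be the minimum-volume enclosing ellipsoid of the image of $L$. Suppose for contradiction there exists $y$ with $\|y\|_2 < 1/\sqrt{d}$ and $y \notin L$. By the Hahn--Banach separation theorem together with the central symmetry of $L$, there are a unit vector $a$ and a constant $c \in (0, 1/\sqrt{d})$ with $L \subseteq S := \{x : |\langle a, x\rangle| \leq c\}$. The next task is to construct an enclosing ellipsoid of strictly smaller volume than $B_2^d$ that still contains the slab-intersected ball $B_2^d \cap S$. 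Taking $E_{r,s} = \{x : \langle a, x\rangle^2 / r^2 + (\|x\|_2^2 - \langle a,x\rangle^2)/s^2 \leq 1\}$ with $r < 1 < s$, the containment $B_2^d \cap S \subseteq E_{r,s}$ reduces to the single scalar constraint $c^2/r^2 + (1-c^2)/s^2 \leq 1$. Minimizing the volume proxy $r \cdot s^{d-1}$ subject to this constraint via Lagrange multipliers yields $r^2 = dc^2$ and $s^2 = d(1-c^2)/(d-1)$, producing volume ratio $f(t) := t \cdot ((d-t)/(d-1))^{d-1}$ at $t = dc^2$.

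I expect the most delicate step to be extracting the sharp threshold $c = 1/\sqrt{d}$: one must verify that $f(t) < 1$ strictly for $0 < t < 1$. A direct computation gives $f(1) = 1$, $(\log f)'(1) = 0$, and $(\log f)''(1) = -1 - 1/(d-1) < 0$, so $t = 1$ is a strict local maximum of $f$ on $(0, d)$; combined with $f(0) = 0$ this forces $f(t) < 1$ for $t < 1$, yielding an enclosing ellipsoid of strictly smaller volume and contradicting the minimality of $B_2^d$. Everything else is structural: compactness for existence, Minkowski's determinant inequality for uniqueness, and Hahn--Banach plus central symmetry to reduce the containment problem to the slab constraint analyzed above.
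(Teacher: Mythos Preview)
The paper does not give its own proof of this statement: Theorem~\ref{thm:lowner} is presented as a ``Variant of Theorem~III'' from John's classical paper and is used only as a black-box ingredient in the proof of Lemma~\ref{lem:rho_svd}. There is therefore nothing in the paper to compare your argument against.

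That said, your proposal is a correct and standard proof of John's theorem in the centrally symmetric case. Existence via compactness in the cone of positive-definite matrices, uniqueness via Minkowski's determinant inequality, and the sharp $1/\sqrt{d}$ factor via the slab construction together with the one-variable analysis of $f(t)=t\bigl((d-t)/(d-1)\bigr)^{d-1}$ are exactly the classical steps. Two small points worth tightening: the feasible set $\mathcal{F}$ is not itself bounded away from singular matrices---one instead restricts to $\{M\in\mathcal{F}:\det M\ge\det M_0\}$ for some initial feasible $M_0$ to obtain compactness; and the paper's statement omits the hypothesis that $L$ have nonempty interior, which you correctly supply and which is always satisfied in the paper's application (the level set of $\|Ax\|_p$ with $A$ of full column rank).
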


\paragraph{Computing $\f$-SVD via L\"{o}wner ellipsoid.}
Intuitively speaking, for an input matrix $A\in \REAL^{n\times d}$, the $\f$-SVD $A=UDV^T$ aims to bound from above and below the cost function $\norm{Ax}^p_p$ for any $x\in \REAL^d$ by the term $\norm{DV^Tx}^p_2$.
Since $\norm{Ax}_p^p$ is a convex continuous function (for every $x \in \REAL^d$), the level set $L = \br{x \middle| x \in\REAL^d, \norm{Ax}_p \leq 1}$ is also convex. Having a convex set enables us to use the \emph{L\"{o}wner ellipsoid} which in short is the minimum volume enclosing ellipsoid of $L$. In addition, contracting the \emph{L\"{o}wner ellipsoid} by $\sqrt{d}$, yields an inscribed ellipsoid in $L$.
It turns out that $D,V$ of the $\f$-SVD represent the L\"{o}wner ellipsoid of $L$ as follows:
$D$ is a diagonal matrix such that its diagonal entries contains the reciprocal values of the ellipsoid axis lengths, and $V$ is an orthogonal matrix which is the basis of the same ellipsoid.
Using the containing and inscribed ellipsoids (the L\"{o}wner ellipsoid and its contracted form), enables us to bound $\f$ using the mahalonobis distance.
Although, in traditional $k$ $\ell_2$-low rank factorization with respect to an input matrix $A \in \REAL^{n \times d}$, the optimal result is equal to the sum of the smallest $d-k$ singular values, we generalize this concept to $\ell_p$-low rank factorization. Specifically, the singular values of $D$ (the reciprocal values of the ellipsoid axis lengths) serve as a bound on the \say{$\ell_p$ singular values of $A$}.

\section{Additive Approximation for the $\ell_{p}$-Low Rank Factorization}
In what follows, we show how to compute an approximated solution for the $\ell_{p}$-low rank factorization, for any $p \geq 1$; see Algorithm~\ref{algo:l1lowrank}. This is based on the $\f$-SVD factorization (see Definition~\ref{def:fsvd}).

\paragraph{From $\f$-SVD to $\ell_{p}$-Low Rank Factorization.} For any $k \in [d-1]$ and any matrix $A \in \REAL^{n \times d}$, the $\ell_{p}$-low rank factorization problem aims to minimize $\norm{A - AXX^T}_{p,p}^p$ over every matrix $X \in \REAL^{d \times k}$ whose columns are orthogonal. As a byproduct of the orthogonality of $X$, the problem above is equivalent to minimizing $\norm{AYY^T}_{p,p}^p$ over every matrix $Y \in \REAL^{d \times \term{d - k}}$ whose columns are orthogonal such that $YY^T = I_d - XX^T$. By exploiting the definition of the entry-wise $\ell_p$ norm of $AYY^T$, we can use $\f$-SVD to bound this term from above and below using the mahalonobis distance. Furthermore, we will show that using the $\f$-SVD, we can compute a matrix $A_k$ of rank $k$ such that entry-wise $\ell_p$ norm of $A - A_k$ depends on the ellipsoid axis lengths; see Algorithm~\ref{algo:l1lowrank} and Theorem~\ref{thm:additive}.


\paragraph{Overview of Algorithm~\ref{algo:l1lowrank}.}
Algorithm~\ref{algo:l1lowrank} receives as input a matrix $A \in \REAL^{n \times d}$ a positive integer $k \in [d-1]$, and a positive number $p \geq 1$, then outputs a matrix $A_k$ of rank $k$, that satisfies Theorem~\ref{thm:additive}. At Line~\ref{algl1:l1}, we compute a pair of matrices $D,V \in \REAL^{d \times d}$ such that the ellipsoid $E := \br{ x \in \REAL^d \middle| x^T VD^TDDV^T x \leq 1}$ is the L\"{o}wner ellipsoid of $L := \br{x \in \REAL^d \middle| \norm{Ax}_p \leq 1}$, where $D$ is a diagonal matrix of rank $d$ and $V$ is an orthogonal matrix. At Line~\ref{algl1:l2}, we compute the matrix $U$ from the $\f$-SVD of $A$; see Definition~\ref{def:fsvd}.  At Lines~\ref{algl1:l3}--~\ref{algl1:l4}, we set $D_k$ to be the diagonal matrix of $d \times d$ entries where the first $k$ diagonal entries are identical to the first $k$ diagonal entries of $D$, while the rest of the matrix is set to $0$.

\begin{algorithm}[htb!]
\SetAlgoLined
\DontPrintSemicolon 
\KwIn{A matrix $A \in \REAL^{n \times d}$ of rank $d$, $p \geq 1$, a positive integer $k \in [d-1]$, and a positive real number $p \geq 1$.}
\KwOut{A matrix $U \in \REAL^{n \times d}$, a diagonal matrix $D_k \in [0, \infty)^{d \times d}$, an orthogonal matrix $V \in \REAL^{d \times d}$ where $U,V$ are from the $\f$-SVD of $A$, and a set of $d$ positive real numbers $\br{\sigma_1, \ldots, \sigma_d}$.}
\vspace{1.5mm}
$\term{D,V} := \Lowner{A, p}$ \label{algl1:l1} \tcp{See Algorithm~3 at the Appendix}
$U := A\term{DV^T}^{-1}$ \label{algl1:l2} \tcp{computing $U$ from the $\f$-SVD of $A$ with respect to the $\ell_p$-regression problem}
$\br{\sigma_1, \ldots, \sigma_d} := $ the diagonal entries of $D$ \label{algl1:l3}\;
$D_k := \mathrm{diag}\term{\sigma_1, \ldots, \sigma_{k}, 0, \ldots, 0}$ \label{algl1:l4}\tcp{A diagonal matrix in $\REAL^{d \times d}$}
\Return{$U, D_k, V, \br{\sigma_1, \ldots, \sigma_d}$} \label{algl1:l5}\;
\caption{\lowRankOne{A,k, p}}
\label{algo:l1lowrank}
\end{algorithm}

\paragraph{Deterministic result.} In what follows, we present our deterministic solution for the $\ell_p$-low rank factorization problem.
\begin{restatable}{theorem}{additiveapprox}
\label{thm:additive}
Let $A \in \REAL^{n \times d}$ be real matrix, $p \geq 1$, $k \in [d-1]$ be an integer and let $\term{U,D_k,V,\br{\sigma_{1}, \ldots \sigma_{d}}}$ be the output of a call to $\lowRankOne{A,k,p}$.
Let $A_k = U D_k V^T$. Then
\[
d\sigma_{d}^p \leq \norm{A - A_k}_{p,p}^p \leq d^{1+\frac{p}{2}} \sigma_k^p.
\]
\end{restatable}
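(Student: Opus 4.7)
The plan is to transfer the two-sided $\f$-SVD bound of Definition~\ref{def:fsvd} from $\norm{Ax}_p^p$ to $\norm{(A-A_k)x}_p^p$, and then evaluate at $x = e_1,\ldots,e_d$ and sum. Since $A = UDV^T$ and $A_k = UD_kV^T$, the residual factors as $A - A_k = UD'V^T$, where $D' := D - D_k = \mathrm{diag}(0,\ldots,0,\sigma_{k+1},\ldots,\sigma_d)$. The key move I would make is a change of variable: for any $x \in \REAL^d$, set $\tilde x := VD^{-1}D'V^T x$, which is well-defined since $D$ has full rank. A direct check gives $DV^T\tilde x = D'V^T x$, hence $A\tilde x = UDV^T\tilde x = UD'V^T x = (A-A_k)x$. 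Applying the $\f$-SVD sandwich to $\tilde x$ therefore yields, for every $x\in\REAL^d$,
\[
\norm{D'V^T x}_2^p \;\leq\; \norm{(A-A_k)x}_p^p \;\leq\; d^{p/2}\,\norm{D'V^T x}_2^p.
\]

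For the upper bound, I would instantiate this at $x = e_i$ and exploit orthogonality of $V$: each column of $V^T$ has unit $\ell_2$-norm, so
\[
\norm{D'V^T e_i}_2^2 \;=\; \sum_{j=k+1}^d \sigma_j^2 V_{ji}^2 \;\leq\; \sigma_{k+1}^2 \sum_{j=1}^d V_{ji}^2 \;=\; \sigma_{k+1}^2 \;\leq\; \sigma_k^2,
\]
using the decreasing ordering of the $\sigma_i$. Raising to power $p/2$, summing over $i\in[d]$, and multiplying by $d^{p/2}$ then gives $\norm{A-A_k}_{p,p}^p \leq d^{1+p/2}\sigma_k^p$.

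For the lower bound, I would apply the left inequality at $x = e_i$ and sum, obtaining $\norm{A-A_k}_{p,p}^p \geq \sum_{i=1}^d \norm{D'V^T e_i}_2^p$. The plan is then to combine the Frobenius identity $\sum_i \norm{D'V^T e_i}_2^2 = \|D'V^T\|_F^2 = \sum_{j>k}\sigma_j^2$ with the monotonicity $\sigma_j \geq \sigma_d$ for $j>k$, via a power-mean/convexity argument on the vectors $\{D'V^T e_i\}_{i\in[d]}$, whose collective $\ell_2$-mass is controlled because $V$ is orthogonal.

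The hardest part will be this lower bound: for $p \in [1,2)$, the map $t\mapsto t^{p/2}$ is concave, so a naive application of Jensen to the Frobenius identity flows in the wrong direction. One must instead leverage the spreading of the $D'V^T e_i$ forced by orthogonality of $V$, together with $\sigma_d \leq \sigma_j$ for all $j>k$, to match the $d\sigma_d^p$ target. The upper bound, by contrast, is essentially mechanical once the $\f$-SVD sandwich has been transferred to the residual, so the real mathematical content lies in carefully combining orthogonality with convexity to obtain the lower bound.
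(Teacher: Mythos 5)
Your upper bound is correct and is essentially the paper's own argument: you transfer the sandwich of Lemma~\ref{lem:rho_svd} to the residual (the paper does this by plugging $x:=\bigl(I-(DV^T)^{-1}D_kV^T\bigr)e_i$ into the lemma, which is exactly your change of variable since $DV^T\tilde{x}=(D-D_k)V^Tx$), and then bounds $\norm{(D-D_k)V^Te_i}_2\le\sigma_{k+1}\le\sigma_k$ using orthogonality of $V$ and the descending order of the $\sigma_i$, just as the paper does via the operator-norm bound $\norm{D-D_k}_2=\sigma_{k+1}$.

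The genuine gap is the lower bound, which you leave as a plan, and the plan cannot succeed as stated. After summing the left-hand inequalities you are committed to showing $\sum_{i=1}^d\norm{(D-D_k)V^Te_i}_2^p\ge d\,\sigma_d^p$, but this intermediate quantity can be strictly smaller than the target: take $V=I_d$ and $\sigma_{k+1}=\cdots=\sigma_d$, in which case $(D-D_k)V^Te_i=0$ for every $i\le k$ and the sum equals $\sum_{j>k}\sigma_j^p=(d-k)\sigma_d^p<d\,\sigma_d^p$ whenever $k\ge 1$. Orthogonality of $V$ only fixes the total mass $\sum_i\norm{(D-D_k)V^Te_i}_2^2=\sum_{j>k}\sigma_j^2$, which is fully consistent with this small value, so no power-mean or convexity refinement starting from the sandwich can recover $d\,\sigma_d^p$; at best this route yields $(d-k)\sigma_d^p$. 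You should also know that the paper's proof follows the identical route and closes this step by applying its Claim~\ref{clm:lowest_singular_value} with $D:=D-D_k$; but $D-D_k$ has its first $k$ diagonal entries equal to zero, so it is not of full rank and its smallest singular value is $0$, not $\sigma_d$, i.e.\ the claim's hypothesis fails and the paper's lower-bound step does not hold as written either. In short, the difficulty you flagged is real: the upper bound is fine and matches the paper, while the stated lower bound $d\,\sigma_d^p$ is not established by your outline, and the paper's own argument for it is defective at exactly the point you identified.
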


\paragraph{Note} that the set $\br{\sigma_i}_{i=1}^d$ denote the reciprocal values of the ellipsoid $E$ axis's lengths where $E$ is the L\"{o}wner ellipsoid of $L = \br{x \in \REAL^d \middle| \norm{Ax}_p \leq 1}$. As discussed in the previous section, these values serve to bound the \say{$\ell_p$ singular values or $A$}.

\paragraph{Randomized result.} In addition to our deterministic result, we also show how to support a randomized version that computes an approximation in a faster time. The intuition, proof, and explanation for this result is given in the technical appendix.
\begin{restatable}{theorem}{additiveapproxcor}
\label{coro:additive}
Let $A \in \REAL^{n \times d}$ be real matrix, $p \geq 1$, $k \in [d-1]$ be an integer.
There exists a randomized algorithm which when given a matrix $A \in \REAL^{n \times d}$, $k \in [d-1]$, in time $O\term{n d \log{n}}$ returns $\term{U,D_k,V,\br{\sigma_{1}, \ldots \sigma_{d}}}$, such that
\[
d \sigma_{d}^p \leq \norm{A - A_k}_{p,p}^p \leq d^{1 + p} \term{d^3 + d^2 \log{n}}^{\abs{1 - p/2}} \sigma_{k+1}^p,
\]
holds with probability at least $1 - \frac{1}{n}$, where $A_k = U D_k V^T$.
\end{restatable}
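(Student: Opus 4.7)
The plan is to mimic the deterministic proof of Theorem~\ref{thm:additive}, replacing the exact $\f$-SVD by a fast approximate one. The approximate $\f$-SVD is obtained by running the deterministic L\"{o}wner ellipsoid routine (Algorithm~3 in the appendix) on a small surrogate matrix $\tilde{A}$, where $\tilde{A}$ is an $\ell_p$-subspace embedding of $A$. Concretely, I would use a one-shot Lewis weights (equivalently, $\ell_p$-sensitivity) sampling scheme to draw a reweighted submatrix $\tilde{A}\in\REAL^{m\times d}$ with $m=O(d^{3}+d^{2}\log n)$ rows such that, with probability at least $1-1/n$, one has $\frac{1}{\kappa}\norm{Ax}_p^p \leq \norm{\tilde{A}x}_p^p \leq \norm{Ax}_p^p$ for every $x\in\REAL^d$, where $\kappa = O(d^3+d^2\log n)$. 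Approximate Lewis weights can be computed in $O(nd\log n)$ time, which is the bottleneck of the whole algorithm.

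With $\tilde A$ in hand, applying the L\"{o}wner routine to the level set $\tilde L=\br{x \in \REAL^d : \norm{\tilde A x}_p \leq 1}$ produces $\tilde D$ and $\tilde V$ with $\norm{\tilde D\tilde V^T x}_2^p \leq \norm{\tilde Ax}_p^p \leq d^{p/2}\norm{\tilde D\tilde V^T x}_2^p$ for every $x$. Composing with the embedding inequalities yields an approximate $\f$-SVD of $A$ whose upper sandwich constant is inflated to $\kappa\, d^{p/2}$. Setting $\tilde U = A(\tilde D\tilde V^T)^{-1}$ and $A_k = \tilde U\tilde D_k\tilde V^T$, where $\tilde D_k$ zeroes all but the top-$k$ diagonal entries of $\tilde D$, gives the candidate rank-$k$ approximation. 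The lower bound $d\sigma_d^p\leq\norm{A-A_k}_{p,p}^p$ follows essentially verbatim from the deterministic argument, after replacing the exact L\"{o}wner axis lengths by those of $\tilde A$ (which only become tighter under subsampling).

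For the upper bound, I would re-run the calculation of Theorem~\ref{thm:additive} with the inflated sandwich constant. The entry-wise cost $\norm{A-A_k}_{p,p}^p$ decomposes into contributions from the bottom $d-k$ directions of $\tilde V$; each such contribution is bounded by converting the L\"{o}wner sandwich (which is an $\ell_p^p$-vs-$\ell_2^p$ inequality) to a row-wise bound through the norm equivalence $\norm{v}_p \leq d^{\abs{1/p-1/2}}\norm{v}_2$. Tracking how the distortion $\kappa$ enters this conversion yields an extra multiplicative $\kappa^{\abs{1-p/2}}$ on top of the deterministic $d^{1+p/2}$ factor, and substituting $\kappa = O(d^3+d^2\log n)$ produces the claimed upper bound $d^{1+p}\term{d^3+d^2\log n}^{\abs{1-p/2}}\sigma_{k+1}^p$. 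The index shift from $\sigma_k$ to $\sigma_{k+1}$ is absorbed into this step, since the residual is naturally expressed through $\tilde\sigma_{k+1},\ldots,\tilde\sigma_d$ rather than through $\tilde\sigma_k$.

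The main obstacle I anticipate is not the sampling step itself (Lewis weights and the resulting subspace embeddings are well understood), but rather nailing the exponent $\abs{1-p/2}$ on $\kappa$. The subtle point is that the L\"{o}wner sandwich compares $\ell_p^p$ against $\ell_2^p$, so when $p\neq 2$ the distortion $\kappa$, which lives on the $\ell_p^p$ side of the sandwich, has to be transported across the $\ell_p$-to-$\ell_2$ norm equivalence to reach the residual bound; this transport costs $\kappa^{1-p/2}$ when $p\leq 2$ and $\kappa^{p/2-1}$ when $p>2$, giving exactly the absolute-value exponent in the statement, and vanishing when $p=2$ as it should. The total running time is $O(nd\log n)$ since after sampling the L\"{o}wner computation operates on a matrix with $m=O(d^3+d^2\log n)$ rows, so its cost is independent of $n$ up to factors polynomial in $d$ and $\log n$.
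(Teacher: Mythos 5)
Your route differs from the paper's: the paper's randomized proof does not sample rows and re-run the L\"{o}wner ellipsoid at all. It invokes a variant of Clarkson--Woodruff (Theorem~\ref{thm:clarkson}) to compute, in $O(nd\log n)$ time, an invertible $R\in\REAL^{d\times d}$ with $\norm{Rx}_2\leq\norm{Ax}_p\leq d\term{d^3+d^2\log n}^{\abs{1/p-1/2}}\norm{Rx}_2$, takes the ordinary SVD $R=\Tilde{U}DV^T$, and then repeats the deterministic calculation of Theorem~\ref{thm:additive} with this sandwich in place of the L\"{o}wner one. In that construction the factor $\term{d^3+d^2\log n}^{\abs{1/p-1/2}}$ is the $\ell_p$-versus-$\ell_2$ norm-equivalence cost inside the $m$-dimensional sketch space (with $m\approx d^3+d^2\log n$ rows); it is attached to the row count of the sketch, not to the distortion of an $\ell_p$-to-$\ell_p$ embedding.

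This is where your argument has a genuine gap. After composing your sampling guarantee $\frac{1}{\kappa}\norm{Ax}_p^p\leq\norm{\Tilde{A}x}_p^p\leq\norm{Ax}_p^p$ with the L\"{o}wner sandwich for $\Tilde{A}$, you obtain $\norm{\Tilde{D}\Tilde{V}^Tx}_2^p\leq\norm{Ax}_p^p\leq\kappa\, d^{p/2}\norm{\Tilde{D}\Tilde{V}^Tx}_2^p$, and in the column-by-column calculation of Theorem~\ref{thm:additive} a multiplicative sandwich constant propagates \emph{linearly} into the final bound: you get $\norm{A-A_k}_{p,p}^p\leq d^{1+p/2}\,\kappa\,\sigma_{k+1}^p$, not $d^{1+p/2}\,\kappa^{\abs{1-p/2}}\sigma_{k+1}^p$. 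There is no step in which the $\kappa$ sitting on the $\ell_p^p$ side gets ``transported'' through the $d$-dimensional equivalence $\norm{v}_p\leq d^{\abs{1/p-1/2}}\norm{v}_2$ and thereby reduced to $\kappa^{\abs{1-p/2}}$; that conversion concerns vectors of length $d$ (or $m$) and never acts on the scalar distortion. With your stated $\kappa=\Theta(d^3+d^2\log n)$ the correctly tracked bound, e.g.\ at $p=1$, is $d^{3/2}(d^3+d^2\log n)\sigma_{k+1}$, which neither matches nor is dominated by the claimed $d^{2}(d^3+d^2\log n)^{1/2}\sigma_{k+1}$, so the exponent $\abs{1-p/2}$ is unjustified as derived. (Conversely, if you used the true Lewis-weights guarantee---constant distortion with $\mathrm{poly}(d)$ rows---your composition would prove a \emph{stronger} bound of order $d^{1+p/2}\sigma_{k+1}^p$ for the ellipsoid of $\Tilde{A}$, but then the $(d^3+d^2\log n)^{\abs{1-p/2}}$ factor in your write-up comes from nowhere, which again signals the bookkeeping is off; you would also need to argue that approximate Lewis weights for every $p\geq 1$, plus the ellipsoid-method computation on $\Tilde{A}$, fit in the claimed $O(nd\log n)$ time, something the paper sidesteps by avoiding the ellipsoid method entirely in the randomized case.)
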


\begin{remark}
Note that in our context of embedding layer compression, the corresponding embedding matrix $A$ has more columns than rows. Regardless, our $\ell_{p}$ norm of any $A - B$ such that $A,B \in {d \times n}$ enables us to have
\[
\norm{A-B}_{p,p}^p = \norm{A^T-B^T}_{p,p}^p.
\]
Hence, by substituting $A := A^T$ and $A_k^T := A_k^T$ yields
\[
d \sigma_{d}^p \leq \norm{A - A_k}_{p,p}^p \leq d^{1 + \frac{p}{2}} \sigma_{k+1}^p.
\]
for our deterministic results, and similarly we can obtain our randomized result.
\end{remark}

\section{Experimental Results}
\label{sec:results}

\begin{table*}[t!]\label{table:bestmodels}
\begin{center}
\begin{tabular}{cc|P{2.5cm}|c c c c c c c | c}
 \hline
 \multicolumn{2}{c|}{Model} & Embedding layer compression rate & MRPC & COLA & MNLI & SST-2 & STS-B & QNLI & RTE & Avg.\\
 \hline
 \multirow{3}{*}{RE-RoBERTa} & base & $15\%$ & $0.49$ & -$2.16$ & $0.01$ & $0.045$ & $0.013$ & $0.018$ & $1.08$ & $-0.072$ \\
 & small & $28\%$ & $0.98$ & $-2.01$ & $0.08$ & $0.68$ & $0.87$ & $1.33$ & $2.52$ & $0.63$ \\
 & tiny & $41\%$ & $2.69$ & $3.82$ & $2.18$ & $2.17$ & $3.10$ & $3.58$ & $2.16$ & $2.81$
 \\
 \hline

 \multirow{3}{*}{RE-XLNet} & base & $15\%$ & $2.20$ & $-0.43$ & $-0.07$ & $0.22$ & $0.03$ & $2.39$ & $2.16$ & $0.92$\\
 & small & $21\%$ & $1.47$ & $0.26$ & $0.11$ & $-0.34$ & $0.03$ & $3.42$ & $4.33$ & $1.32$\\
 & tiny & $28\%$ & $1.96$ & $3.19$ & $0.47$ & $-0.22$ & $0.19$ & $4.46$ & $6.13$ & $2.31$\\
 \hline
 \multirow{3}{*}{\centering RE-BERT} & base & $15\%$ & $0.73$ & $-0.54$ & $0.48$ & $-1.49$ & $0.85$ & $2.36$ & $1.80$ & $0.59$\\
 & small & $21\%$ & $3.43$ & $0.08$ & $1.72$ & $0.45$ & $1.62$ & $3.78$ & $1.44$ & $1.78$ \\
 & tiny & $28\%$ & $4.90$ & $-0.94$ & $3.48$ & $1.49$ & $2.66$ & $7.65$ & $1.80$ & $3$
 \\
 \hline
 RE-DistilBERT & base & $15\%$ & $1.47$ & $5.24$ & $0.86$ & $0.34$ & $0.13$ & $5.80$ & $2.16$ & $2.28$ \\
\end{tabular}
\end{center}
\caption{In the above, we present our compressed networks and their drop in accuracy based on the compression rate of the embedding layer. Specifically, each non percentile value represents the accuracy drop achieved by our compressed model with respect to its original model (e.g., RE-RoBERTa is a compressed model of RoBERTa), while negative values presents improvements in the accuracy upon the non-compressed version of the corresponding model. The last column is the average accuracy drop over all tested tasks. The ``RE'' here stands for ``Robust Embedding''.}
\label{table:bestmodels}
\end{table*}

\paragraph{The compressed networks. }
We compress several frequently used NLP networks:
\begin{enumerate*}[label=(\roman*)]
    \item BERT~\cite{devlin-etal-2019-bert},\label{bert}
    \item DistilBERT~\cite{sanhdistilbert},
    \item XLNet~\cite{yang2019XLNet},
    \item and RoBERTa~\cite{liu2019RoBERTa};\label{RoBERTa}
\end{enumerate*}
See full details on the sizes of each network and there embedding layer before compression at Table~\ref{table:bestmodels}.

\paragraph{Implementation, Software, and Hardware. }All the experiments were conducted on a AWS p$2$.xlargs machine with $1$ GPU NVIDIA K$80$, $4$ vCPUs and $61$ RAM [GiB].
We implemented our suggested compression algorithm (Algorithm~\ref{algo:l1lowrank}) in python~$3.8$ using Numpy library~\cite{van2011numpy}.
To build and train networks~\ref{bert}--\ref{RoBERTa}, we used the suggested implementation at the Transformers~\footnote{https://github.com/huggingface/transformers} library from HuggingFace~\cite{wolf2019huggingface} (Transformers version $2.3$, and PyTorch version $1.5.1$~\cite{paszke2017automatic}). Before the compression, all the networks were fine-tuned on all the tasks from the GLUE benchmark to obtain almost the same accuracy results as reported in the original papers. Since we didn't succeed to obtain close accuracy on the tasks QQP and WNLI (with most of the network), we didn't include results on them.

\paragraph{Our compression. }We compress each embedding layer (matrix) of the reported networks by factorizing it into two smaller layers (matrices) as follows.
For an embedding layer that is defined by a matrix $A\in \REAL^{n\times d}$, we compute the matrices $U,D_k,V$ by a call to $\lowRankOne{A, k, 1}$ (see Algorithm~\ref{algo:l1lowrank}), where $k$ is the low rank projection we wish to have. Observe, that the matrix $D_k$ is a diagonal matrix and its last $d-k$ columns are zero columns. We then compute a non-square diagonal matrix $D_k^\prime\in \REAL^{d\times k}$ that is the result of removing all the zero columns of $D_k$. Now, the $\ell_1$ $k$-rank approximation of $A$ can be factorized as $A_k = \term{U\sqrt{D_k^\prime}} \term{{\sqrt{D_k^\prime}}^T V^T}$. Hence, we save the two matrices (layers): \begin{enumerate*}[label=(\roman*)]
    \item $U\sqrt{D_k^\prime}$ of size $n\times k$,
    \item and ${\sqrt{D_k^\prime}}^T V^T$ of size $k\times d$.
\end{enumerate*}
This yields two layer of total size of $nk + kd$ instead of single embedding layer of total size of $nd$.

\paragraph{Reported results. }We report the test accuracy drop (relative error) on all the tasks from GLUE benchmark~\cite{wang2018glue} after compression for several compression rates.
\begin{enumerate}[label=(\roman*)]
    \item In Figure~\ref{fig:results} the $x$-axis is the compression rate of the embedding layer, and the $y$-axis is the accuracy drop (relative error) with respect to the original accuracy of the network.
    Each figure reports the results for a specific task from the GLUE benchmark on all the networks we compress. 
    Here, all reported results are compared to the known $\ell_2$-factorization using SVD. Also, in all the experiments we do not fine-tune the model after compressing, this is to show the robustness and efficiency of our technique.
    \item Table~\ref{table:bestmodels} suggests the best compressed networks in terms of accuracy VS size. For every network from~\ref{bert}--\ref{RoBERTa}, we suggest a compressed version of it with a very little drop in the accuracy, and sometimes with an improved accuracy. Given a network ``X'', we call our compressed version of ``X'' by ``RE-X'', e.g., RE-BERT and RE-XLNet. The ``RE'' here stands for ``Robust Embedding''.
\end{enumerate}

\section{Discussion}

\begin{figure*}[htb!]
    \centering
    \begin{subfigure}{.33\textwidth}
    \hspace{0.5in}
    \centering
    \includegraphics[width=1\textwidth]{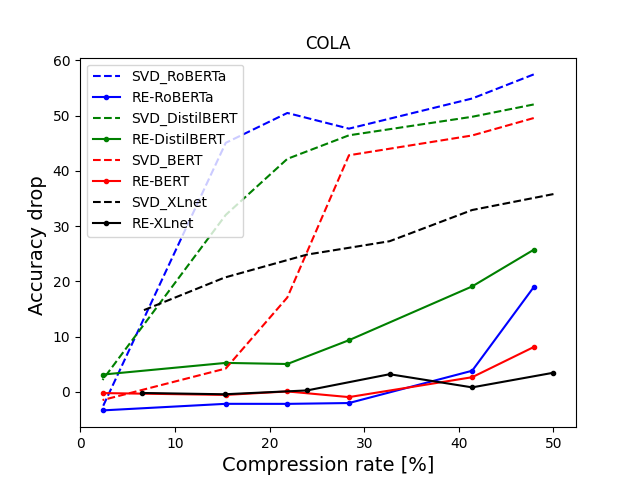}
    \end{subfigure}
    \begin{subfigure}{.33\textwidth}
    \centering
    \hspace{-0.5in}
    \includegraphics[width=1\textwidth]{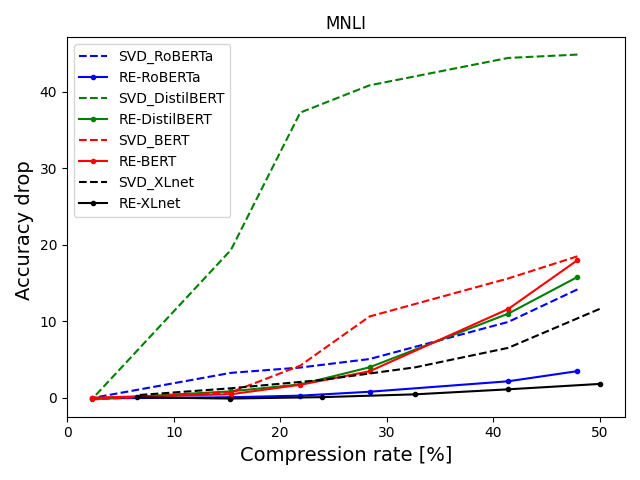}
    \end{subfigure}
    \begin{subfigure}{.33\textwidth}
    \centering
    \hspace{-0.5in}
    \includegraphics[width=1\textwidth]{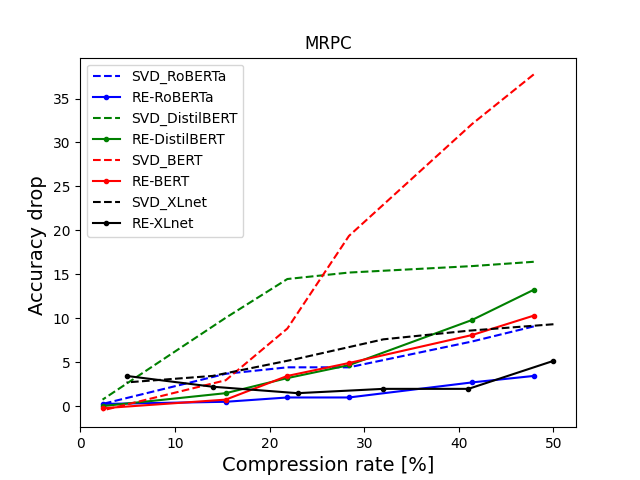}
    \end{subfigure}
    \begin{subfigure}{\textwidth}
    \hspace*{-0.14in}
    \includegraphics[width=0.25\textwidth]{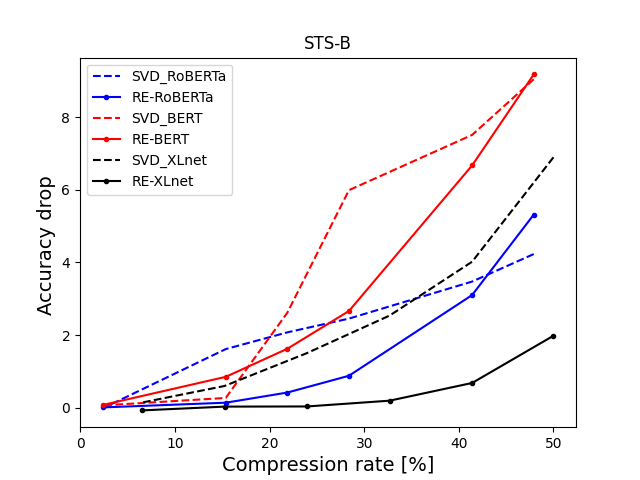}
    \includegraphics[width=0.25\textwidth]{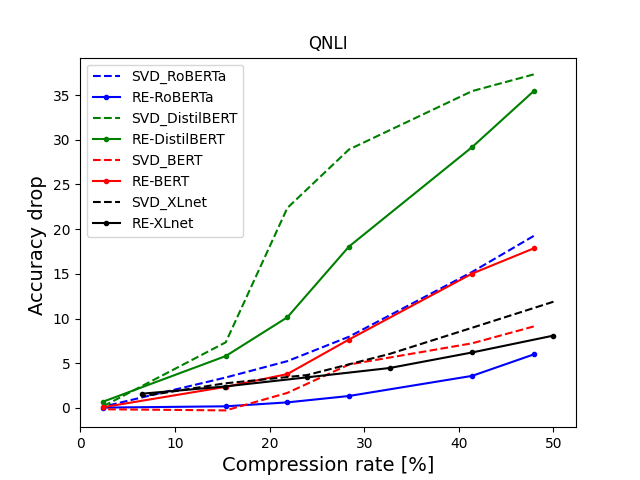}
    \includegraphics[width=0.25\textwidth]{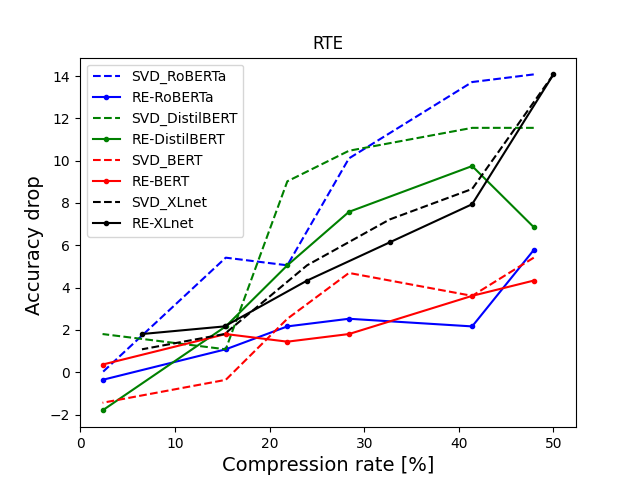}
    \includegraphics[width=0.24\textwidth]{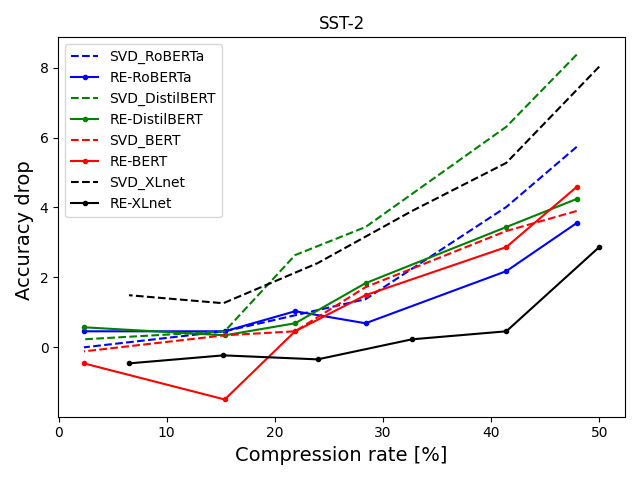}
    \end{subfigure}
    \caption{Here, we report the accuracy drop (additive error) as a function of the embedding layer's compression rate on the networks~\ref{bert}--\ref{RoBERTa}. We compare our results with SVD over several task from the GLUE benchmark. For a netowrk ``X''  Our compressed version of it is called ``RE-X'', e.g., RE-BERT and RE-XLNet.}
    \label{fig:results}
\end{figure*}

It can be seen by Figure~\ref{fig:results} that our approach is more robust than the traditional SVD. In most of the experiments our suggested compression achieves better accuracy for the same compression rate compared to the traditional SVD. Mainly, we observed that our compression schemes shines when either vocabulary is rich (the number of subword units is large) or the model itself is small (excluding the embedding layer).
Specifically speaking, in RoBERTa our method achieve better results, this is due to the fact that RoBERTa's vocabulary is rich (i.e., $50$K subword units compared to the $30$K in BERT). This large vocabulary increases the probability of having outliers in it, which is the main justification for our approach.
In DistilBERT, the network is highly efficient. This can lead to a \say{sensitive snowball effect}, i.e., the classification is highly affected by even the smallest errors caused by the compression of the embedding layer. Since SVD is sensitive to outliers and due to the fact that the network is highly sensitive to small errors, the existence of outliers highly affects the results. This phenomenon is illustrated throughout Figure~\ref{fig:results}. Here, our compression scheme outperforms the SVD due to its robustness against outliers which in turn, achieves smaller errors.
As for XLNet, the model encodes the \emph{relative positional embedding}, which in short, represents an embedding of the relative positional distance between words. In our context, this means that having outliers, highly affects the relative positional embedding which in turn affects the classification accuracy. Hence, this explains why we outperform SVD.
Since none of the above phenomena holds for BERT, this may explain why sometimes SVD achieves better results. However, across most tasks, our compression scheme is favorable upon SVD.

Finally, for some tasks at low compression rates, the accuracy has been improved (e.g., see task SST-2 at Figure~\ref{fig:results} when compressing BERT). This may be due to the fact that at low compression rates, we remove the least necessary (redundant) dimensions. Thus, if these dimensions are actually unnecessary, by removing them, we obtain a generalized model which is capable to classify better.

\begin{table}[htb!]
\begin{center}
\begin{tabular}{cc|c|c}
 \hline
 \multicolumn{2}{c|}{Model} & Embedding layer size & Parameters \\ \hline
 BERT & base & $30522 \times 768$ & $110M$ \\
 \hline
 RoBERTa & base & $50265 \times 768$ & $125M$ \\
 \hline
 XLNet & base & $32000 \times 768$ & $110M$ \\
 \hline
 DistilBERT & base & $30522 \times 768$ & $66M$ \\
 \hline
 \end{tabular}
\end{center}
\caption{The sizes of the networks BERT, RoBERTA, XLNet, and DistilBERT, and there embedding layers.}
\label{table:modelsizes}
\end{table}
\section{Conclusion and Future Work}
\label{sec:futurework}
We provided an algorithm that computes an approximation for $\ell_p$ $k$-rank approximation, where $p \geq 1$. We then suggested a new approach for compressing networks based on $k$-rank $\ell_p$-approximation where $p\in [1,2]$ instead of $\ell_2$. The experimental results at section~\ref{sec:results}, showed that our suggested algorithm overcomes the traditional $\ell_2$ $k$-rank approximation and achieves higher accuracy for the same compression rate.

Future work includes: \begin{enumerate*}[label=(\arabic*)]
    \item Extending our approach to other factorization models, such as Non-Negative Matrix Approximation or Dictionary Learning.
    \item Experimental results on other benchmarks and other models,
    \item suggesting algorithms for the $\ell_p$ $k$-rank approximation for any $p \in (0,1)$, while checking the practical contribution in compressing deep networks for this case, and
    \item combining this result with other compression techniques to get smaller network with higher accuracy.
\end{enumerate*}

\bibliography{main}

\begin{thebibliography}{64}
\providecommand{\natexlab}[1]{#1}
\providecommand{\url}[1]{\texttt{#1}}
\providecommand{\urlprefix}{URL }
\expandafter\ifx\csname urlstyle\endcsname\relax
  \providecommand{\doi}[1]{doi:\discretionary{}{}{}#1}\else
  \providecommand{\doi}{doi:\discretionary{}{}{}\begingroup
  \urlstyle{rm}\Url}\fi

\bibitem[{Acharya et~al.(2019)Acharya, Goel, Metallinou, and
  Dhillon}]{acharya2019online}
Acharya, A.; Goel, R.; Metallinou, A.; and Dhillon, I. 2019.
\newblock Online embedding compression for text classification using low rank
  matrix factorization.
\newblock In \emph{Proceedings of the AAAI Conference on Artificial
  Intelligence}, volume~33, 6196--6203.

\bibitem[{Bermejo and Cabestany(2001)}]{bermejo2001oriented}
Bermejo, S.; and Cabestany, J. 2001.
\newblock Oriented principal component analysis for large margin classifiers.
\newblock \emph{Neural Networks} 14(10): 1447--1461.

\bibitem[{Chen et~al.(2016)Chen, Xu, Zhang, and Guestrin}]{chen2016training}
Chen, T.; Xu, B.; Zhang, C.; and Guestrin, C. 2016.
\newblock Training deep nets with sublinear memory cost.
\newblock \emph{arXiv preprint arXiv:1604.06174} .

\bibitem[{Clarkson et~al.(2016)Clarkson, Drineas, Magdon-Ismail, Mahoney, Meng,
  and Woodruff}]{clarkson2016fast}
Clarkson, K.~L.; Drineas, P.; Magdon-Ismail, M.; Mahoney, M.~W.; Meng, X.; and
  Woodruff, D.~P. 2016.
\newblock The fast cauchy transform and faster robust linear regression.
\newblock \emph{SIAM Journal on Computing} 45(3): 763--810.

\bibitem[{Clarkson and Woodruff(2015)}]{clarkson2015input}
Clarkson, K.~L.; and Woodruff, D.~P. 2015.
\newblock Input sparsity and hardness for robust subspace approximation.
\newblock In \emph{2015 IEEE 56th Annual Symposium on Foundations of Computer
  Science}, 310--329. IEEE.

\bibitem[{Clarkson and Woodruff(2017)}]{clarkson2017low}
Clarkson, K.~L.; and Woodruff, D.~P. 2017.
\newblock Low-rank approximation and regression in input sparsity time.
\newblock \emph{Journal of the ACM (JACM)} 63(6): 1--45.

\bibitem[{Code(2020)}]{opencode}
Code. 2020.
\newblock Open source code for all the algorithms presented in this paper.
\newblock The authors commit to publish upon acceptance of this paper or
  reviewer request.

\bibitem[{Cohen, Nelson, and Woodruff(2015)}]{cohen2015optimal}
Cohen, M.~B.; Nelson, J.; and Woodruff, D.~P. 2015.
\newblock Optimal approximate matrix product in terms of stable rank.
\newblock \emph{arXiv preprint arXiv:1507.02268} .

\bibitem[{Dai and Le(2015)}]{dai2015semi}
Dai, A.~M.; and Le, Q.~V. 2015.
\newblock Semi-supervised sequence learning.
\newblock In \emph{Advances in neural information processing systems},
  3079--3087.

\bibitem[{Devlin et~al.(2019)Devlin, Chang, Lee, and
  Toutanova}]{devlin-etal-2019-bert}
Devlin, J.; Chang, M.-W.; Lee, K.; and Toutanova, K. 2019.
\newblock {BERT}: Pre-training of Deep Bidirectional Transformers for Language
  Understanding.
\newblock In \emph{Proceedings of the 2019 Conference of the North {A}merican
  Chapter of the Association for Computational Linguistics: Human Language
  Technologies, Volume 1 (Long and Short Papers)}, 4171--4186. Minneapolis,
  Minnesota: Association for Computational Linguistics.
\newblock \doi{10.18653/v1/N19-1423}.
\newblock \urlprefix\url{https://www.aclweb.org/anthology/N19-1423}.

\bibitem[{Ding et~al.(2006)Ding, Zhou, He, and Zha}]{ding2006r}
Ding, C.; Zhou, D.; He, X.; and Zha, H. 2006.
\newblock R 1-PCA: rotational invariant L 1-norm principal component analysis
  for robust subspace factorization.
\newblock In \emph{Proceedings of the 23rd international conference on Machine
  learning}, 281--288.

\bibitem[{Donoho(2006)}]{donoho2006compressed}
Donoho, D.~L. 2006.
\newblock Compressed sensing.
\newblock \emph{IEEE Transactions on information theory} 52(4): 1289--1306.

\bibitem[{Donoho and Elad(2003)}]{donoho2003optimally}
Donoho, D.~L.; and Elad, M. 2003.
\newblock Optimally sparse representation in general (nonorthogonal)
  dictionaries via $\ell_1$ minimization.
\newblock \emph{Proceedings of the National Academy of Sciences} 100(5):
  2197--2202.

\bibitem[{Eckart and Young(1936)}]{eckart1936approximation}
Eckart, C.; and Young, G. 1936.
\newblock The approximation of one matrix by another of lower rank.
\newblock \emph{Psychometrika} 1(3): 211--218.

\bibitem[{Fan, Grave, and Joulin(2019)}]{fan2019reducing}
Fan, A.; Grave, E.; and Joulin, A. 2019.
\newblock Reducing transformer depth on demand with structured dropout.
\newblock \emph{arXiv preprint arXiv:1909.11556} .

\bibitem[{Feldman and Langberg(2011)}]{feldman2011unified}
Feldman, D.; and Langberg, M. 2011.
\newblock A unified framework for approximating and clustering data.
\newblock In \emph{Proceedings of the forty-third annual ACM symposium on
  Theory of computing}, 569--578.

\bibitem[{Feldman et~al.(2010)Feldman, Monemizadeh, Sohler, and
  Woodruff}]{feldman2010coresets}
Feldman, D.; Monemizadeh, M.; Sohler, C.; and Woodruff, D.~P. 2010.
\newblock Coresets and sketches for high dimensional subspace approximation
  problems.
\newblock In \emph{Proceedings of the twenty-first annual ACM-SIAM symposium on
  Discrete Algorithms}, 630--649. SIAM.

\bibitem[{Feldman, Volkov, and Rus(2016)}]{feldman2016dimensionality}
Feldman, D.; Volkov, M.; and Rus, D. 2016.
\newblock Dimensionality reduction of massive sparse datasets using coresets.
\newblock In \emph{Advances in Neural Information Processing Systems},
  2766--2774.

\bibitem[{Golub and Van~Loan(2012)}]{golub2012matrix}
Golub, G.~H.; and Van~Loan, C.~F. 2012.
\newblock \emph{Matrix computations}, volume~3.
\newblock JHU press.

\bibitem[{Gomez et~al.(2017)Gomez, Ren, Urtasun, and
  Grosse}]{gomez2017reversible}
Gomez, A.~N.; Ren, M.; Urtasun, R.; and Grosse, R.~B. 2017.
\newblock The reversible residual network: Backpropagation without storing
  activations.
\newblock In \emph{Advances in neural information processing systems},
  2214--2224.

\bibitem[{Gordon(2019)}]{gordon_2019}
Gordon, M.~A. 2019.
\newblock All The Ways You Can Compress BERT.
\newblock
  http://mitchgordon.me/machine/learning/2019/11/18/all-the-ways-to-compress-BERT.html.

\bibitem[{Gordon, Duh, and Andrews(2020)}]{gordon2020compressing}
Gordon, M.~A.; Duh, K.; and Andrews, N. 2020.
\newblock Compressing BERT: Studying the effects of weight pruning on transfer
  learning.
\newblock \emph{arXiv preprint arXiv:2002.08307} .

\bibitem[{Gr{\"o}tschel, Lov{\'a}sz, and
  Schrijver(1993)}]{grotschel1993ellipsoid}
Gr{\"o}tschel, M.; Lov{\'a}sz, L.; and Schrijver, A. 1993.
\newblock The Ellipsoid Method.
\newblock In \emph{Geometric Algorithms and Combinatorial Optimization},
  64--101. Springer.

\bibitem[{Guo et~al.(2019)Guo, Liu, Mungall, Lin, and Wang}]{guo2019reweighted}
Guo, F.-M.; Liu, S.; Mungall, F.~S.; Lin, X.; and Wang, Y. 2019.
\newblock Reweighted proximal pruning for large-scale language representation.
\newblock \emph{arXiv preprint arXiv:1909.12486} .

\bibitem[{Huang et~al.(2015)Huang, Shi, Van~Huffel, and Suykens}]{huang2015two}
Huang, Xiaolin $\ell_a$nd~Liu, Y.; Shi, L.; Van~Huffel, S.; and Suykens, J.~A.
  2015.
\newblock Two-level $\ell_1$ minimization for compressed sensing.
\newblock \emph{Signal Processing} 108: 459--475.

\bibitem[{Jiao et~al.(2019)Jiao, Yin, Shang, Jiang, Chen, Li, Wang, and
  Liu}]{jiao2019tinybert}
Jiao, X.; Yin, Y.; Shang, L.; Jiang, X.; Chen, X.; Li, L.; Wang, F.; and Liu,
  Q. 2019.
\newblock Tinybert: Distilling bert for natural language understanding.
\newblock \emph{arXiv preprint arXiv:1909.10351} .

\bibitem[{John(2014)}]{john2014extremum}
John, F. 2014.
\newblock Extremum problems with inequalities as subsidiary conditions.
\newblock In \emph{Traces and emergence of nonlinear programming}, 197--215.
  Springer.

\bibitem[{Jubran, Maalouf, and Feldman(2019)}]{jubran2019introduction}
Jubran, I.; Maalouf, A.; and Feldman, D. 2019.
\newblock Introduction to coresets: Accurate coresets.
\newblock \emph{arXiv preprint arXiv:1910.08707} .

\bibitem[{Lan et~al.(2019)Lan, Chen, Goodman, Gimpel, Sharma, and
  Soricut}]{lan2019albert}
Lan, Z.; Chen, M.; Goodman, S.; Gimpel, K.; Sharma, P.; and Soricut, R. 2019.
\newblock Albert: A lite bert for self-supervised learning of language
  representations.
\newblock \emph{arXiv preprint arXiv:1909.11942} .

\bibitem[{Le and Mikolov(2014)}]{le2014distributed}
Le, Q.; and Mikolov, T. 2014.
\newblock Distributed representations of sentences and documents.
\newblock In \emph{International conference on machine learning}, 1188--1196.

\bibitem[{Liu et~al.(2019{\natexlab{a}})Liu, Wang, Lin, Socher, and
  Xiong}]{liu2019attentive}
Liu, L.; Wang, H.; Lin, J.; Socher, R.; and Xiong, C. 2019{\natexlab{a}}.
\newblock Attentive student meets multi-task teacher: Improved knowledge
  distillation for pretrained models.
\newblock \emph{arXiv preprint arXiv:1911.03588} .

\bibitem[{Liu et~al.(2019{\natexlab{b}})Liu, Ott, Goyal, Du, Joshi, Chen, Levy,
  Lewis, Zettlemoyer, and Stoyanov}]{liu2019RoBERTa}
Liu, Y.; Ott, M.; Goyal, N.; Du, J.; Joshi, M.; Chen, D.; Levy, O.; Lewis, M.;
  Zettlemoyer, L.; and Stoyanov, V. 2019{\natexlab{b}}.
\newblock Roberta: A robustly optimized bert pretraining approach.
\newblock \emph{arXiv preprint arXiv:1907.11692} .

\bibitem[{Maalouf, Jubran, and Feldman(2019)}]{maalouf2019fast}
Maalouf, A.; Jubran, I.; and Feldman, D. 2019.
\newblock Fast and accurate least-mean-squares solvers.
\newblock In \emph{Advances in Neural Information Processing Systems},
  8307--8318.

\bibitem[{Maalouf et~al.(2020)Maalouf, Jubran, Tukan, and
  Feldman}]{maalouf2020faster}
Maalouf, A.; Jubran, I.; Tukan, M.; and Feldman, D. 2020.
\newblock Faster PAC Learning and Smaller Coresets via Smoothed Analysis.
\newblock \emph{arXiv preprint arXiv:2006.05441} .

\bibitem[{Maalouf, Statman, and Feldman(2020)}]{maalouf2020tight}
Maalouf, A.; Statman, A.; and Feldman, D. 2020.
\newblock Tight sensitivity bounds for smaller coresets.
\newblock In \emph{Proceedings of the 26th ACM SIGKDD International Conference
  on Knowledge Discovery \& Data Mining}, 2051--2061.

\bibitem[{McCarley(2019)}]{mccarley2019pruning}
McCarley, J.~S. 2019.
\newblock Pruning a bert-based question answering model.
\newblock \emph{arXiv preprint arXiv:1910.06360} .

\bibitem[{Meng and Mahoney(2013)}]{meng2013low}
Meng, X.; and Mahoney, M.~W. 2013.
\newblock Low-distortion subspace embeddings in input-sparsity time and
  applications to robust linear regression.
\newblock In \emph{Proceedings of the forty-fifth annual ACM symposium on
  Theory of computing}, 91--100.

\bibitem[{Michel, Levy, and Neubig(2019)}]{michel2019sixteen}
Michel, P.; Levy, O.; and Neubig, G. 2019.
\newblock Are sixteen heads really better than one?
\newblock In \emph{Advances in Neural Information Processing Systems},
  14014--14024.

\bibitem[{Mikolov et~al.(2013)Mikolov, Sutskever, Chen, Corrado, and
  Dean}]{mikolov2013distributed}
Mikolov, T.; Sutskever, I.; Chen, K.; Corrado, G.~S.; and Dean, J. 2013.
\newblock Distributed representations of words and phrases and their
  compositionality.
\newblock In \emph{Advances in neural information processing systems},
  3111--3119.

\bibitem[{Mukherjee and Awadallah(2019)}]{mukherjee2019distilling}
Mukherjee, S.; and Awadallah, A.~H. 2019.
\newblock Distilling transformers into simple neural networks with unlabeled
  transfer data.
\newblock \emph{arXiv preprint arXiv:1910.01769} .

\bibitem[{Nelson and Nguy{\^e}n(2013)}]{nelson2013osnap}
Nelson, J.; and Nguy{\^e}n, H.~L. 2013.
\newblock OSNAP: Faster numerical linear algebra algorithms via sparser
  subspace embeddings.
\newblock In \emph{2013 ieee 54th annual symposium on foundations of computer
  science}, 117--126. IEEE.

\bibitem[{Paszke et~al.(2017)Paszke, Gross, Chintala, Chanan, Yang, DeVito,
  Lin, Desmaison, Antiga, and Lerer}]{paszke2017automatic}
Paszke, A.; Gross, S.; Chintala, S.; Chanan, G.; Yang, E.; DeVito, Z.; Lin, Z.;
  Desmaison, A.; Antiga, L.; and Lerer, A. 2017.
\newblock Automatic differentiation in PyTorch.
\newblock In \emph{NIPS-W}.

\bibitem[{Pedregosa et~al.(2011)Pedregosa, Varoquaux, Gramfort, Michel,
  Thirion, Grisel, Blondel, Prettenhofer, Weiss, Dubourg, Vanderplas, Passos,
  Cournapeau, Brucher, Perrot, and Duchesnay}]{scikit-learn}
Pedregosa, F.; Varoquaux, G.; Gramfort, A.; Michel, V.; Thirion, B.; Grisel,
  O.; Blondel, M.; Prettenhofer, P.; Weiss, R.; Dubourg, V.; Vanderplas, J.;
  Passos, A.; Cournapeau, D.; Brucher, M.; Perrot, M.; and Duchesnay, E. 2011.
\newblock Scikit-learn: Machine Learning in {P}ython.
\newblock \emph{Journal of Machine Learning Research} 12: 2825--2830.

\bibitem[{Peters et~al.(2018)Peters, Neumann, Iyyer, Gardner, Clark, Lee, and
  Zettlemoyer}]{peters-etal-2018-deep}
Peters, M.~E.; Neumann, M.; Iyyer, M.; Gardner, M.; Clark, C.; Lee, K.; and
  Zettlemoyer, L. 2018.
\newblock Deep contextualized word representations.
\newblock \emph{arXiv preprint arXiv:1802.05365} .

\bibitem[{Radford et~al.(2018)Radford, Narasimhan, Salimans, and
  Sutskever}]{radford2018improving}
Radford, A.; Narasimhan, K.; Salimans, T.; and Sutskever, I. 2018.
\newblock Improving language understanding by generative pre-training.

\bibitem[{Radford et~al.(2019)Radford, Wu, Child, Luan, Amodei, and
  Sutskever}]{radford2019language}
Radford, A.; Wu, J.; Child, R.; Luan, D.; Amodei, D.; and Sutskever, I. 2019.
\newblock Language models are unsupervised multitask learners.
\newblock \emph{OpenAI Blog} 1(8): 9.

\bibitem[{Raffel et~al.(2019)Raffel, Shazeer, Roberts, Lee, Narang, Matena,
  Zhou, Li, and Liu}]{raffel2019exploring}
Raffel, C.; Shazeer, N.; Roberts, A.; Lee, K.; Narang, S.; Matena, M.; Zhou,
  Y.; Li, W.; and Liu, P.~J. 2019.
\newblock Exploring the limits of transfer learning with a unified text-to-text
  transformer.
\newblock \emph{arXiv preprint arXiv:1910.10683} .

\bibitem[{Sanh et~al.(2019)Sanh, Debut, Chaumond, and Wolf}]{sanhdistilbert}
Sanh, V.; Debut, L.; Chaumond, J.; and Wolf, T. 2019.
\newblock DistilBERT, a distilled version of BERT: smaller, faster, cheaper and
  lighter.
\newblock \emph{arXiv preprint arXiv:1910.01108} .

\bibitem[{Shen et~al.(2020)Shen, Dong, Ye, Ma, Yao, Gholami, Mahoney, and
  Keutzer}]{shen2020q}
Shen, S.; Dong, Z.; Ye, J.; Ma, L.; Yao, Z.; Gholami, A.; Mahoney, M.~W.; and
  Keutzer, K. 2020.
\newblock Q-BERT: Hessian Based Ultra Low Precision Quantization of BERT.
\newblock In \emph{AAAI}, 8815--8821.

\bibitem[{Shyamalkumar and Varadarajan(2007)}]{shyamalkumar2007efficient}
Shyamalkumar, N.~D.; and Varadarajan, K. 2007.
\newblock Efficient subspace approximation algorithms.
\newblock In \emph{SODA}, volume~7, 532--540.

\bibitem[{Sun et~al.(2019)Sun, Cheng, Gan, and Liu}]{sun-etal-2019-patient}
Sun, S.; Cheng, Y.; Gan, Z.; and Liu, J. 2019.
\newblock Patient knowledge distillation for bert model compression.
\newblock \emph{arXiv preprint arXiv:1908.09355} .

\bibitem[{Sun et~al.(2020)Sun, Yu, Song, Liu, Yang, and
  Zhou}]{sun-etal-2020-mobilebert}
Sun, Z.; Yu, H.; Song, X.; Liu, R.; Yang, Y.; and Zhou, D. 2020.
\newblock Mobilebert: a compact task-agnostic bert for resource-limited
  devices.
\newblock \emph{arXiv preprint arXiv:2004.02984} .

\bibitem[{Tang et~al.(2019)Tang, Lu, Liu, Mou, Vechtomova, and
  Lin}]{tang2019distilling}
Tang, R.; Lu, Y.; Liu, L.; Mou, L.; Vechtomova, O.; and Lin, J. 2019.
\newblock Distilling task-specific knowledge from bert into simple neural
  networks.
\newblock \emph{arXiv preprint arXiv:1903.12136} .

\bibitem[{Tukan, Maalouf, and Feldman(2020)}]{tukan2020coresets}
Tukan, M.; Maalouf, A.; and Feldman, D. 2020.
\newblock Coresets for Near-Convex Functions.
\newblock \emph{arXiv preprint arXiv:2006.05482} .

\bibitem[{Van Der~Walt, Colbert, and Varoquaux(2011)}]{van2011numpy}
Van Der~Walt, S.; Colbert, S.~C.; and Varoquaux, G. 2011.
\newblock The NumPy array: a structure for efficient numerical computation.
\newblock \emph{Computing in Science \& Engineering} 13(2): 22.

\bibitem[{Varadarajan and Xiao(2012)}]{varadarajan2012sensitivity}
Varadarajan, K.; and Xiao, X. 2012.
\newblock On the Sensitivity of Shape Fitting Problems.
\newblock In \emph{32nd International Conference on Foundations of Software
  Technology and Theoretical Computer Science}, 486. Citeseer.

\bibitem[{Wang et~al.(2018)Wang, Singh, Michael, Hill, Levy, and
  Bowman}]{wang2018glue}
Wang, A.; Singh, A.; Michael, J.; Hill, F.; Levy, O.; and Bowman, S. 2018.
\newblock GLUE: A Multi-Task Benchmark and Analysis Platform for Natural
  Language Understanding.
\newblock In \emph{Proceedings of the 2018 EMNLP Workshop BlackboxNLP:
  Analyzing and Interpreting Neural Networks for NLP}, 353--355.

\bibitem[{Wang, Wohlwend, and Lei(2019)}]{wang2019structured}
Wang, Z.; Wohlwend, J.; and Lei, T. 2019.
\newblock Structured pruning of large language models.
\newblock \emph{arXiv preprint arXiv:1910.04732} .

\bibitem[{Wikipedia(2020)}]{wiki:Mean_squared_error}
Wikipedia. 2020.
\newblock {Mean squared error} --- {W}ikipedia{,} The Free Encyclopedia.
\newblock
  \url{http://en.wikipedia.org/w/index.php?title=Mean\%20squared\%20error&oldid=977071088}.
\newblock [Online; accessed 07-September-2020].

\bibitem[{Wolf et~al.(2019)Wolf, Debut, Sanh, Chaumond, Delangue, Moi, Cistac,
  Rault, Louf, Funtowicz et~al.}]{wolf2019huggingface}
Wolf, T.; Debut, L.; Sanh, V.; Chaumond, J.; Delangue, C.; Moi, A.; Cistac, P.;
  Rault, T.; Louf, R.; Funtowicz, M.; et~al. 2019.
\newblock HuggingFace's Transformers: State-of-the-art Natural Language
  Processing.
\newblock \emph{ArXiv} arXiv--1910.

\bibitem[{Yang et~al.(2019)Yang, Dai, Yang, Carbonell, Salakhutdinov, and
  Le}]{yang2019XLNet}
Yang, Z.; Dai, Z.; Yang, Y.; Carbonell, J.; Salakhutdinov, R.~R.; and Le, Q.~V.
  2019.
\newblock Xlnet: Generalized autoregressive pretraining for language
  understanding.
\newblock In \emph{Advances in neural information processing systems},
  5753--5763.

\bibitem[{Yu et~al.(2017)Yu, Liu, Wang, and Tao}]{yu2017compressing}
Yu, X.; Liu, T.; Wang, X.; and Tao, D. 2017.
\newblock On compressing deep models by low rank and sparse decomposition.
\newblock In \emph{Proceedings of the IEEE Conference on Computer Vision and
  Pattern Recognition}, 7370--7379.

\bibitem[{Zafrir et~al.(2019)Zafrir, Boudoukh, Izsak, and
  Wasserblat}]{zafrirq8bert}
Zafrir, O.; Boudoukh, G.; Izsak, P.; and Wasserblat, M. 2019.
\newblock Q8bert: Quantized 8bit bert.
\newblock \emph{arXiv preprint arXiv:1910.06188} .

\bibitem[{Zhao et~al.(2019)Zhao, Gupta, Song, and Zhou}]{zhao2019extreme}
Zhao, S.; Gupta, R.; Song, Y.; and Zhou, D. 2019.
\newblock Extreme language model compression with optimal subwords and shared
  projections.
\newblock \emph{arXiv preprint arXiv:1909.11687} .

\end{thebibliography}
\appendix
\section{Our algorithms}
\begin{figure}[!b]
    \includegraphics[width=0.49\textwidth]{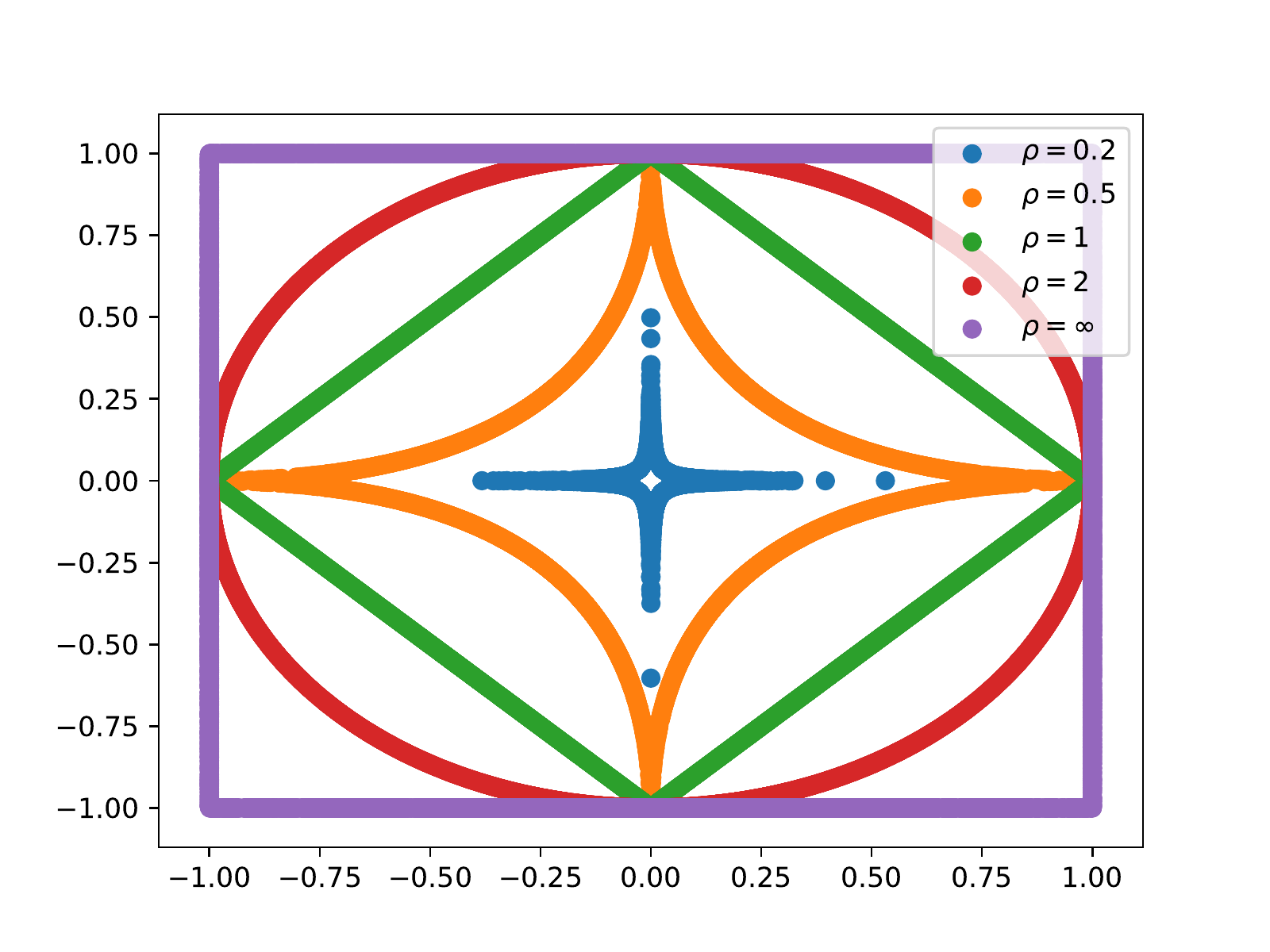}
    \caption{Unit balls in different $p$-spaces: the lower the $p$, the higher the norm of the vector.}
    \label{fig:norms}
\end{figure}
\subsection{Optimal solution for $\ell_2$ Low Rank Approximation}

We start with the simple known case of the $\ell_{2}$-low rank approximation, where the optimal solution can be computed using the known SVD factorization; see Algorithm~\ref{algo:l2lowrank}. Then, we show how we can compute an approximated solution for the general case of $\ell_{p}$-low rank approximation, for any $p \geq 1$. This is based on the $\f$-SVD factorization (see Definition~\ref{def:fsvd}) that generalizes the idea of SVD to any norm; see Algorithm~\ref{algo:l1lowrank}.

\begin{algorithm}[htb!]
\SetAlgoLined
\DontPrintSemicolon 
\KwIn{A matrix $A \in \REAL^{n \times d}$ of rank $d$, and a positive integer $k \in [d]$.}
\KwOut{A matrix $A_k \in \REAL^{n \times d}$ of rank $k$ which is an optimal solution of the $\ell_2$ $k$-rank approximation problem}
\vspace{1.5mm}
$(U,D,V) :=$ the \emph{SVD} decomposition of $A$\; \label{algl2:l1} \tcp{Computed using Singular value decomposition algorithm}
$D_k$ := a $d \times d$ diagonal matrix where the first $k$ entries of its diagonal are identical to the first $k$ entries of the diagonal of $D$ and the rest are zeros.\; \label{algl2:l2}
$A_k := U D_k V$ \label{algl2:l3}\tcp{A $n \times d$ matrix of rank $k$}
\Return{$A_k$}\; \label{algl2:l4}
\caption{$\lowRankTwo{A,k}$}
\label{algo:l2lowrank}
\end{algorithm}

Given a matrix $A \in \REAL^{n \times d}$ of rank $d$, and an integer $k \in [d-1]$, Algorithm~\ref{algo:l2lowrank} computes a matrix $A_k \in \REAL^{n \times d}$ of rank $k \in [d-1]$ which is an optimal solution for $\ell_2$ $k$-rank approximation problem.
The heart of the algorithm lies in computing the SVD decomposition of $A$ at Line~\ref{algl2:l1}, where we compute an orthogonal matrix $U \in \REAL^{n \times d}$, a diagonal matrix $D \in [0,\infty)^{d \times d}$, and an orthogonal matrix $V \in \REAL^{d \times d}$ such that $A = UDV^T$. At Line~\ref{algl2:l2}, we zero out the $d-k$ entries of $D$ and we call the new matrix $D_k$, and finally at Line~\ref{algl2:l3} we set the optimal solution to the $\ell_2$ $k$-rank approximation problem involving $A$ to be $A_k$. This is guaranteed based on the Eckart, Young and Minsky Theorem~\cite{eckart1936approximation}.

\subsection{Computing the L\"{o}wner ellipsoid}
For an input matrix $A \in \REAL^{n \times d}$ of rank $d$, and a number $p \geq 1$, we now show how to compute the \emph{L\"{o}wner ellipsoid} for the set  $L := \br{x \middle| x \in \REAL^d, \norm{Ax}_p \leq 1}$. This is a crucial step towards computing the $\f$-SVD (see Definition~\ref{def:fsvd}) for the matrix $A$ in the context of the $\ell_{p}$-low rank approximation problem, which will allow us to suggest an approximated solution; See Theorem~\ref{thm:additive}.

\paragraph{Overview of Algorithm~\ref{alg:Lowner} (computing the L\"{o}wner ellipsoid).} Algorithm~\ref{alg:Lowner} receives as input a matrix $A \in \REAL^{n \times d}$ of rank $d$, and a number $p \geq 1$. It outputs a L\"{o}wner ellipsoid for the set $L$ (see Line~\ref{algLown:l1} of Algorithm~\ref{alg:Lowner}). First at Line~\ref{algLown:l1} we initialize $L$ to be set of all the points $x$ in $\REAL^d$ such that $\norm{Ax}_p \leq 1$. At Lines~\ref{algLown:l2}--\ref{algLown:l4} we find a ball $E$ in $\REAL^d$ of radius $r$, which contains the set $L$ and its center is set to be the origin $0_d$. Then, we build a diagonal matrix $F$ where we set its diagonal entries to $r$.

Lines~\ref{algLown:l7}--\ref{algLown:l11} represent the pseudo-code of the basic Ellipsoid method which is described in details at~\cite{grotschel1993ellipsoid}, where we set $H$ to the separating hyperplane between $c$ (the center of the ellipsoid $E$) and $L$, $b$ is set to be the multiplication between $F$ and the normalized subgradient of $\norm{Ax}_p$ at $x = c$, where $b$ is used to set the next candidate ellipsoid; See Figure~\ref{fig:lownerVisualization} for visualization of this method.

In Lines~\ref{algLown:l12}--\ref{algLown:l16}, we compute the next candidate ellipsoid $E$, and based on it we set $\Tilde{V}$ to be the set containing the vertices of the inscribed ellipsoid $\frac{1}{d}\term{E - c} + c$ in $L$. Now, if $\Tilde{V} \subseteq L$ then we halt the algorithm, otherwise we find the farthest vertex point $v$ in $\Tilde{V}$ from $L$ with respect to $\norm{Ax}_p$, and finally we set $H$ to be the separating hyperplane between $v$ and $L$.

Lines~\ref{algLown:l18}--\ref{algLown:l24} presents the pseudo code of applying a shallow cut update to the ellipsoid $E$, this is described in details at~\cite{grotschel1993ellipsoid}.
Finally, at Line~\ref{algLown:l26}, we set $G$ to be the Cholesky decomposition of $F^{-1}$; See \cite{golub2012matrix} for more details. For formal details see Theorem~\ref{thm:lowner}

\begin{figure*}[htb!]
    \centering
    \begin{subfigure}{.3\textwidth}
    \centering
    \includegraphics[width=1\textwidth]{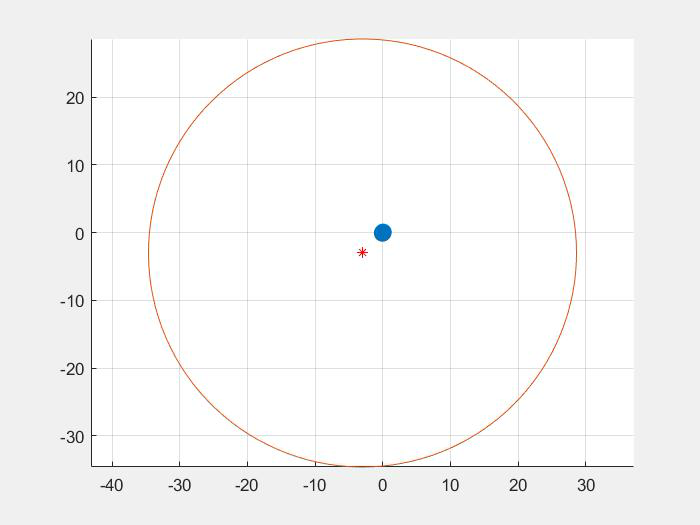}
    \caption{}
    \end{subfigure}
    \begin{subfigure}{.3\textwidth}
    \centering
    \includegraphics[width=1\textwidth]{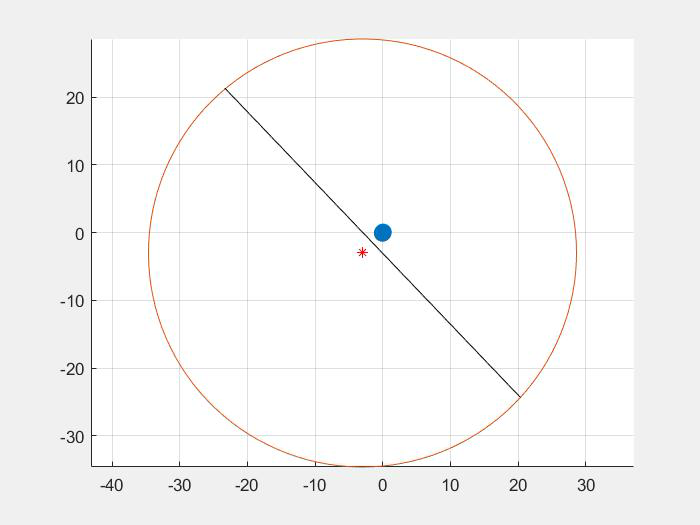}
    \caption{}
    \label{fig:step2}
    \end{subfigure}
    \begin{subfigure}{.3\textwidth}
    \centering
    \includegraphics[width=1\textwidth]{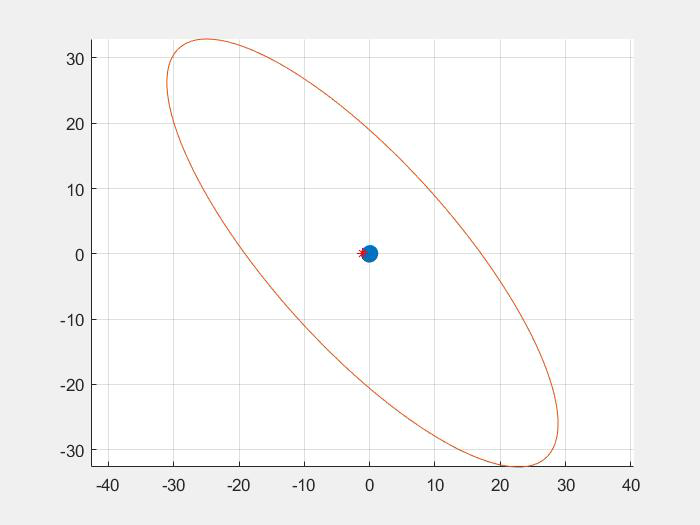}
    \caption{}
    \label{fig:step3}
    \end{subfigure}
    \begin{subfigure}{.3\textwidth}
    \centering
    \includegraphics[width=1\textwidth]{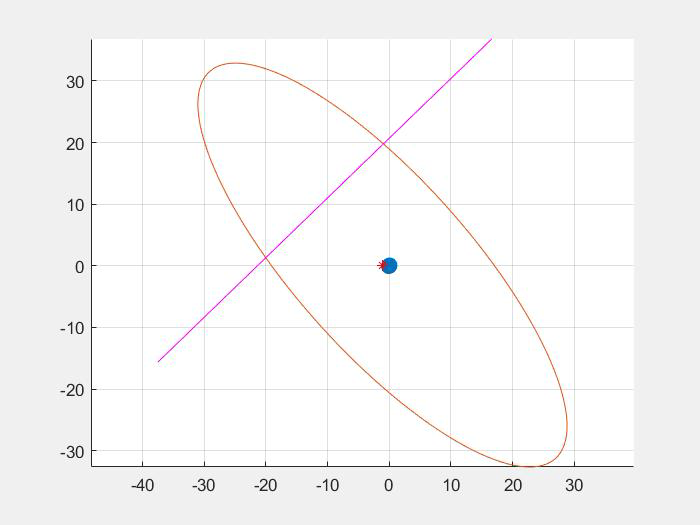}
    \caption{}
    \label{fig:step4}
    \end{subfigure}
    \begin{subfigure}{.3\textwidth}
    \centering
    \includegraphics[width=1\textwidth]{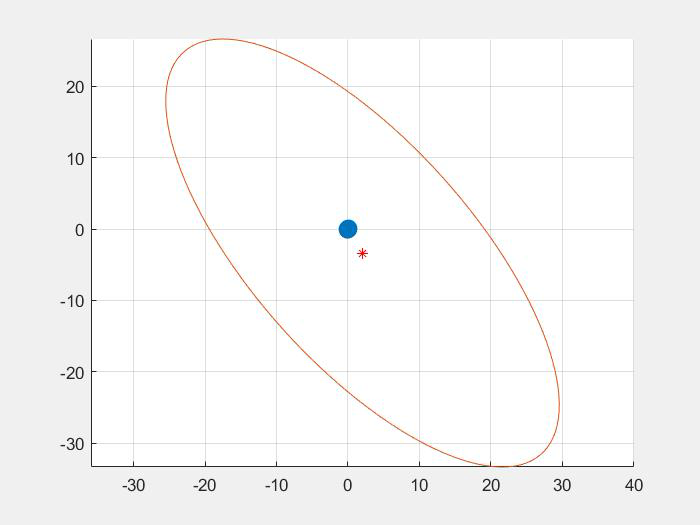}
    \caption{}
    \label{fig:step5}
    \end{subfigure}
    \begin{subfigure}{.3\textwidth}
    \centering
    \includegraphics[width=1\textwidth]{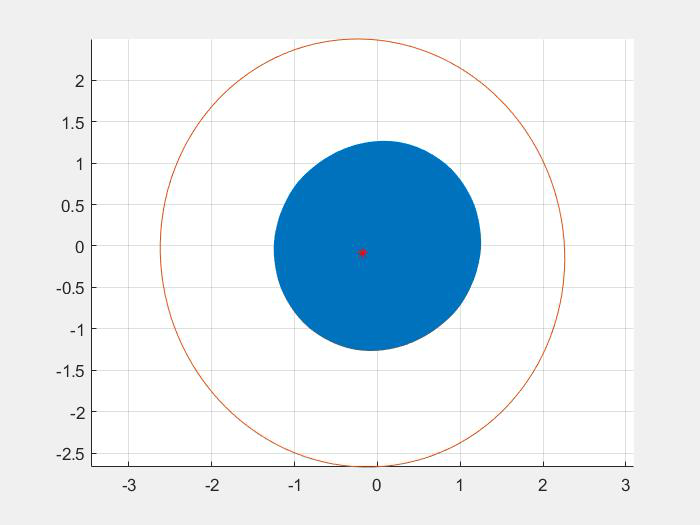}
    \caption{}
    \label{fig:step6}
    \end{subfigure}
    \caption{\textit{Computing the L\"{o}wener ellipsoid:} \textit{Step~\RNum{1}}: We start with an ellipsoid that contains our level-set (the blue body). From here, the basic ellipsoid method is invoked, i.e. while the center is not contained inside the level set (blue body), a separating hyperplane between the center of the ellipsoid and the level set is computed, and the ellipsoid is stretched in a way such that the center is getting closer in distance to the level set. The basic ellipsoid method halts when the center is contained in the level set; see Figure~\ref{fig:step2}--\ref{fig:step3} for illustration of the ellipsoid method. \textit{Step \RNum{3}:} We compute a contracted version of the current ellipsoid, and check if all of its vertices are contained in the level set.
    If there exists one ellipsoid's vertex which is not contained in the level set, we find the farthest vertex of the contracted ellipsoid from the level set and compute a separating hyperplane between it and the level set, which then the ellipsoid is stretched such that this vertex becomes closer to the level set presented in Figures~\ref{fig:step4}--\ref{fig:step5}.
    When then loop over Steps\RNum{2}--\RNum{3} until  the contracted ellipsoid's vertices are contained in the level set; see Figure~\ref{fig:step6}.}
    \label{fig:lownerVisualization}
\end{figure*}

\begin{algorithm*}
\SetAlgoLined
\DontPrintSemicolon 
\KwIn{A matrix $A \in \REAL^{n \times d}$ of rank $d$, and $p \geq 1$}
\KwOut{A diagonal matrix $D \in [0, \infty)^{d \times d}$ and an orthogonal matrix $V \in \REAL^{d \times d}$ satisfying Theorem~\ref{thm:lowner}.}
\vspace{1.5mm}
$L := \br{x \middle| x \in \REAL^d, \norm{Ax}_p \leq 1}$ \label{algLown:l1}\;
$E := $ A ball centered around the origin which contains $L$ \label{algLown:l2}\;
$r := $ the radius of $E$ \label{algLown:r}\;
$F := r I_d$ \label{algLown:l3}\;
$c := \begin{bmatrix} 0\\ \vdots \\ 0\end{bmatrix} \in \REAL^d$ \label{algLown:l4}\;
\While{$\mathrm{true}$ \label{algLown:l5}}{
    \While{$c \not\in L$\label{algLown:l6}}{ \tcc{Here we apply the basic ellipsoid method}
        $grad :=  \norm{Ac}_p^{p-1} \cdot \norm{Ac}_{p-1}^{p-1} \cdot \begin{bmatrix} \sum\limits_{i=1}^n \frac{A_{i,1} c_1}{\abs{A_{i*}^Tc }},
        \cdots,
        \sum\limits_{i=1}^n \frac{A_{i,d} c[d]}{\abs{A_{i*}^Tc}}
        \end{bmatrix}^T$ \label{algLown:l7}\tcp{The gradient of $\ell_p$-regression at $c$}
        $H := \frac{1}{\norm{grad}_\infty} grad$ \label{algLown:l8} \tcp{Separating hyperplane between the center of the ellipsoid and $L$}
        $b := \frac{1}{\sqrt{H^T F H}} F H$ \label{algLown:l9}\;
        $c := c  - \frac{1}{d+1}b$ \label{algLown:l10}\tcp{Updating the center of the ellipsoid}
        $F := \frac{d^2}{d^2 -1} \term{F - \frac{2}{d+1} b b^T}$ \label{algLown:l11} \tcp{updating the basis of the ellipsoid}
    }
    $E := \br{ x \in \REAL^d \middle|\term{x-c}^T F^{-1} \term{x-c} \leq 1}$ \label{algLown:l12}\;
    $\Tilde{V} := $ vertices of $\frac{1}{d}\term{E - c} + c$ \label{algLown:l13} \tcp{compute the vertices of the enclosed version of $E$ in $L$}
    \If{$\Tilde{V} \subseteq L$ \label{algLown:l14}}{
    Go to Line $27$ \label{algLown:l15}\;
    }
    $v := \argmax{x \in \Tilde{V}} \norm{Ax}_p$ \label{algLown:l16} \tcp{the farthest vertex of the contained ellipsoid from $L$ by the norm $\norm{Ax}_p$}
    $grad :=  \norm{Av}_p^{p-1} \cdot \norm{Av}_{p-1}^{p-1} \cdot \begin{bmatrix} \sum\limits_{i=1}^n \frac{A_{i,1} v_1}{\abs{A_{i*}^Tv }} ,
        \cdots,
        \sum\limits_{i=1}^n \frac{A_{i,d} v_d}{\abs{A_{i*}^Tv}}
        \end{bmatrix}^T$ \label{algLown:l17}\tcp{The gradient of $\ell_p$-regression at $v$}
    $H := \frac{1}{\norm{grad}_\infty} grad$ \label{algLown:l18} \tcp{Separating hyperplane between $v$ and $L$}
    \tcc{Here we apply the shallow cut ellipsoid update}
    $z := \frac{1}{\term{d+1}^2}$ \label{algLown:l19}\;
    $\sigma := \frac{d^3 \term{d+2}}{\term{d+1}^3\term{d-1}}$ \label{algLown:l20}\;
    $\zeta := 1 + \frac{1}{2 d^2\term{d+1}^2}$\label{algLown:l21}\;
    $\tau := \frac{2}{d\term{d+1}}$\label{algLown:l22}\;
    $b := \frac{1}{\sqrt{H^T F H}}F H$ \label{algLown:l23}\;
    $F := \zeta \sigma \term{F - \tau \term{b b^T}}$ \label{algLown:l24}\;
    $c := c - z b$ \label{algLown:l25}\;
}
$G := \mathrm{chol}\term{F^{-1}}$ \label{algLown:l26} \tcp{The Cholesky decomposition of $F^{-1}$}
$\term{U,D,V} := $ the \emph{SVD} of $G$\;
\Return{$D, V$}
\caption{\Lowner{A, p}}
\label{alg:Lowner}
\end{algorithm*}

\section{Proof of our main theorems}

\subsection{Proof of Theorem~\ref{thm:additive}}

\begin{claim}
\label{clm:lowest_singular_value}
Let $D \in [0,\infty)^{d \times d}$ be diagonal matrix of rank $d$, and let $\sigma > 0$ be the lowest singular value of $D$. Then for every unit vector $x \in \REAL^d$,
\[
\norm{Dx}_2 \geq \sigma.
\]
\end{claim}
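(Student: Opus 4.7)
The plan is to exploit the fact that the singular values of a diagonal matrix with nonnegative entries are exactly its diagonal entries, and then to lower-bound the squared norm of $Dx$ coordinate-wise.

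First I would observe that since $D \in [0,\infty)^{d\times d}$ is diagonal of rank $d$, all diagonal entries $D_{11},\ldots,D_{dd}$ must be strictly positive (any zero diagonal entry would drop the rank). The singular values of such a $D$ are precisely the set $\{D_{11},\ldots,D_{dd}\}$, so the hypothesis ``$\sigma$ is the lowest singular value'' becomes $\sigma = \min_{i \in [d]} D_{ii} > 0$.

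Next I would compute $\norm{Dx}_2^2$ directly. Since $D$ is diagonal, $(Dx)_i = D_{ii} x_i$, so
\[
\norm{Dx}_2^2 \;=\; \sum_{i=1}^d D_{ii}^2\, x_i^2 \;\geq\; \sigma^2 \sum_{i=1}^d x_i^2 \;=\; \sigma^2 \norm{x}_2^2 \;=\; \sigma^2,
\]
where the inequality uses $D_{ii} \geq \sigma$ for every $i$ together with $x_i^2 \geq 0$, and the final equality uses the unit-vector assumption $\norm{x}_2 = 1$. Taking square roots gives $\norm{Dx}_2 \geq \sigma$, as claimed.

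There is no real obstacle here; the only subtlety worth flagging is the justification that rank $d$ forces every diagonal entry of $D$ to be strictly positive, which is what makes the identification of the minimum diagonal entry with the smallest singular value legitimate. Everything else is a one-line coordinate computation.
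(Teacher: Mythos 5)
Your proof is correct and follows essentially the same argument as the paper: a coordinate-wise computation bounding each diagonal entry $D_{ii}$ below by $\sigma$ and using $\norm{x}_2=1$. Your explicit remark that rank $d$ forces every diagonal entry to be strictly positive (so that $\sigma$ is indeed the minimum diagonal entry) is a small justification the paper leaves implicit, but the route is the same.
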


\begin{proof}
Let $x \in \REAL^d$ be a unit vector, and for every $i \in [d]$, let $D_{i,i}$ denote the $i$th diagonal entry of $D$, and $x_i$ denotes the $i$th entry of $x$.
Observe that
\begin{equation*}
\begin{split}
\norm{Dx}_2 &= \term{\sum_{i=1}^d \abs{D_{i,i}x_i}^2}^{\frac{1}{2}} \geq \term{\sum_{i=1}^d \abs{\sigma x_i}^2}^{\frac{1}{2}} \\
&= \sigma \norm{x}_2 = \sigma,
\end{split}
\end{equation*}
where the first equality follows from the definition of norm, the inequality holds by definition of $\sigma$, and the last equality holds since $x$ is a unit vector.
\end{proof}

\begin{restatable}[Special case of Lemma 15~\cite{tukan2020coresets}]{lemma}{rhosvd}
\label{lem:rho_svd}
Let $A \in \REAL^{n \times d}$ be a matrix of full rank, $p \geq 1$. Then there exist a diagonal matrix $D \in [0, \infty)^{d \times d}$ of full rank and an orthogonal matrix $V \in \REAL^{d \times d}$ such that for every $x \in \REAL^d$,
\begin{equation}
\label{eq:rho_svd}
\norm{DV^Tx}_2^p \leq \norm{Ax}_p^p \leq d^{\frac{p}{2}} \norm{DV^Tx}_2^p
\end{equation}
\end{restatable}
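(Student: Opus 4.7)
The plan is to construct $D$ and $V$ directly from the L\"{o}wner ellipsoid of the level set $L = \br{x \in \REAL^d : \norm{Ax}_p \leq 1}$, and then convert the two ellipsoid containments supplied by Theorem~\ref{thm:lowner} into the desired sandwich bounds via the positive homogeneity of both $\norm{A \cdot}_p$ and $\norm{DV^T \cdot}_2$.

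First I would verify that $L$ satisfies the hypotheses of Theorem~\ref{thm:lowner}. Central symmetry is immediate, since $\norm{A(-x)}_p = \norm{Ax}_p$. Convexity follows because $x \mapsto \norm{Ax}_p$ is a seminorm (positively homogeneous and subadditive by the triangle inequality for $\norm{\cdot}_p$), so its sublevel sets are convex. Because $A$ has full column rank, $x \mapsto \norm{Ax}_p$ is in fact a genuine norm on $\REAL^d$, hence equivalent to $\norm{\cdot}_2$; this yields both boundedness and nonempty interior of $L$. Closedness follows from continuity of $\norm{A \cdot}_p$, so $L$ is a centrally symmetric compact convex body. Applying Theorem~\ref{thm:lowner} then produces a unique full-dimensional ellipsoid $E$ with $\tfrac{1}{\sqrt{d}} E \subseteq L \subseteq E$. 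Any such centered full-dimensional ellipsoid is the $1$-sublevel set of a positive definite quadratic form, and by the spectral theorem I can parameterize it exactly in the form specified by Theorem~\ref{thm:lowner}:
\[
E = \br{x \in \REAL^d : x^T V D^T D V^T x \leq 1},
\]
with $V \in \REAL^{d \times d}$ orthogonal and $D \in (0,\infty)^{d \times d}$ diagonal of full rank.

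Next I would translate the two containments into norm inequalities. In the parameterization above, $y \in E$ is equivalent to $\norm{DV^T y}_2 \leq 1$, and $y \in \tfrac{1}{\sqrt{d}} E$ is equivalent to $\norm{DV^T y}_2 \leq 1/\sqrt{d}$. From $L \subseteq E$: for any $x \neq 0$, setting $y = x / \norm{Ax}_p$ gives $\norm{Ay}_p = 1$, hence $y \in L \subseteq E$, so $\norm{DV^T y}_2 \leq 1$, which rearranges to $\norm{DV^T x}_2 \leq \norm{Ax}_p$. From $\tfrac{1}{\sqrt{d}} E \subseteq L$: for any $x \neq 0$, setting $y = x / \term{\sqrt{d}\,\norm{DV^T x}_2}$ gives $\norm{DV^T y}_2 = 1/\sqrt{d}$, hence $y \in \tfrac{1}{\sqrt{d}} E \subseteq L$, so $\norm{Ay}_p \leq 1$, which rearranges to $\norm{Ax}_p \leq \sqrt{d}\,\norm{DV^T x}_2$. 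The case $x = 0$ is trivial because both $A$ and $DV^T$ send it to the zero vector. Raising the two inequalities to the $p$-th power yields~\eqref{eq:rho_svd}.

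The main obstacle is the technical verification that Theorem~\ref{thm:lowner} genuinely applies, i.e.\ that $L$ is a bona fide convex body and not degenerate; this is where the full-rank hypothesis on $A$ does all the work, by promoting the seminorm $\norm{A \cdot}_p$ to a true norm on $\REAL^d$ and thereby guaranteeing that $L$ is bounded with nonempty interior, which in turn forces the resulting $D$ to be strictly positive (full rank). Once that is in place, the rest of the proof is a routine homogeneity argument.
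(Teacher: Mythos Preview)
Your proof is correct and follows essentially the same approach as the paper: both construct $D$ and $V$ from the L\"{o}wner ellipsoid of $L = \br{x : \norm{Ax}_p \leq 1}$ and then convert the two containments $\tfrac{1}{\sqrt{d}}E \subseteq L \subseteq E$ into the sandwich inequality via homogeneity, with only cosmetic differences in the choice of normalizing scalar. If anything, your treatment of the hypotheses of Theorem~\ref{thm:lowner} (compactness and nonempty interior via the full-rank assumption) is more careful than the paper's, which only notes convexity, central symmetry, and that $L$ spans $\REAL^d$.
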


\begin{proof}
First, let $L = \br{\Tilde{x} \middle| \Tilde{x} \in \REAL^d, \norm{A\Tilde{x}}_p^p \leq 1}$, and put $x \in \REAL^d$. Observe that
\begin{enumerate}[label=(\roman*)]
    \item since $p \geq 1$, the term $\norm{A\Tilde{x}}_p$ is a convex function for every $\Tilde{x} \in \REAL^d$ which follows from properties of norm function. This means that the level set $L$ is a convex set. \label{prop1}
    \item In addition, by definition of $L$, it holds that for every $\Tilde{x} \in L$, also $-\Tilde{x} \in L$, which makes $L$ a centrally symmetric set by definition.\label{prop2}
    \item Since $A$ is of full rank, then $L$ spans $\REAL^d$.\label{prop3}
\end{enumerate}

Since properties~\ref{prop1}--\ref{prop3} hold, we get by Theorem~\ref{thm:lowner} that there exists a diagonal matrix $D \in [0,\infty)^{d \times d}$ of full rank and an orthogonal matrix $V \in \REAL^d$ such that the set $E = \br{\Tilde{x} \middle| \Tilde{x} \in \REAL^d, \Tilde{x}^T VD^TDV^T \Tilde{x} \leq 1}$ satisfies
\begin{equation}
\label{eq:lownerRes}
\frac{1}{\sqrt{d}} E \subseteq L \subseteq E.
\end{equation}


\paragraph{Proving the right hand side of~\eqref{eq:rho_svd}:} Let $y = \frac{1}{\norm{DV^Tx}_2} x$, and observe that by~\eqref{eq:lownerRes}
\begin{equation}
\label{eq:inclusion_1}
\frac{1}{\sqrt{d}} y \in L.
\end{equation}

Hence,
\begin{equation}
\label{eq:lowner_upper_bound}
\begin{split}
\norm{Ax}_p^p &= \term{\sqrt{d} \norm{DV^Tx}_2}^p \norm{\frac{1}{\sqrt{d}}Ay}_p^p\\
&\leq d^{\frac{p}{2}} \norm{DV^Tx}_2^p,
\end{split}
\end{equation}
where the equality follows from the definition of $y$, and the inequality holds by~\eqref{eq:inclusion_1}.

\paragraph{Proving the left hand side of~\eqref{eq:rho_svd}:} Since $L$ spans $\REAL^d$, then there exists $b > 0$ such that $\norm{A\term{bx}}_p^p = 1$. By~\eqref{eq:lownerRes}, $bx \in E$ which results in $\norm{DV^Tx}_2 = \frac{1}{b}\norm{DV^T \term{bx}}_2 \leq \frac{1}{b}$. Thus,
\begin{equation}
\label{eq:lowner_lower_bound}
\norm{Ax}_p^p = \frac{1}{b^p} \norm{A\term{bx}}_p^p = \frac{1}{b^p} \geq  \norm{DV^Tx}_2^p
\end{equation}

Since all of the above holds for every $x \in \REAL^d$, Lemma~\ref{lem:rho_svd} follows by combining~\eqref{eq:lowner_upper_bound} with~\eqref{eq:lowner_lower_bound}.
\end{proof}

\additiveapprox*

\begin{proof}
First, we assume that $p \neq 2$, otherwise the $\norm{\cdot}_2$ factorization is the \emph{SVD} factorization, and we obtain the optimal solution for the $\ell_2$ low rank approximation problem.
For every $i \in [d]$, let $e_i \in \REAL^d$ be a vector of zeros, except for its $i$th entry where it is set to $1$.

\begin{equation*}
\begin{split}
\norm{A - A_k}_{p, p}^p &= \sum\limits_{i=1}^d \norm{\term{A-A_k}e_i}_p^p \\
&= \sum\limits_{i=1}^d \norm{A \term{I - \term{DV^T}^{-1}D_kV^T}e_i}_p^p
\end{split}
\end{equation*}
where the first equality holds by definition of $\norm{\cdot}_{p,p}^p$, the second equality follows from definition of $A_k$ (see Lines~\ref{algl1:l3}-\ref{algl1:l4} of Algorithm~\ref{algo:l1lowrank}).

Plugging in $A:= A$, $D:= D$, $V := V$, $x:= \term{I - \term{DV^T}^{-1}D_kV^T}e_i$ into Lemma~\ref{lem:rho_svd}, yields that for every $i \in [d]$,
\begin{equation}
\label{eq:additive_approx_for_single_i}
\begin{split}
\norm{\term{D - D_k}V^Te_i}_2^p &\leq \norm{\term{A-A_k}e_i}_{p,p}^p \\
&\leq d^{\frac{p}{2}} \norm{\term{D - D_k}V^Te_i}_2^p.
\end{split}
\end{equation}

Observe that for every $i \in [d]$,
\begin{equation}
\label{eq:addive_approx_1}
\begin{split}
\norm{\term{D - D_k}V^Te_i}_2^p &\leq \norm{D - D_k}_2^p \norm{V^Te_i}_2^p \\
&\leq \norm{D -D_k}_2^p,
\end{split}
\end{equation}
where the first inequality holds by properties of the $\ell_2$ matrix induced norm, and the second inequality holds since $V$ is an orthogonal matrix.

Since $V^Te_i$ is a unit vector,
\begin{equation}
\label{eq:lower_bound_singular}
\norm{\term{D-D_k}V^Te_i}_2^p \geq \sigma_d^p,
\end{equation}
where the inequality holds by plugging $x := V^Te_i$ and $D := \term{D-D_k}$ into Claim~\ref{clm:lowest_singular_value}.

In addition, we have that
\begin{equation}
\label{eq:prop_D}
\sigma_d \leq \norm{D-D_k}_2 = \sigma_{k+1}
\end{equation}
where both the inequality and equality hold since $\sigma_d$ is the lowest eigenvalue of $D$ and $D$ being a diagonal matrix.

By combining~\eqref{eq:additive_approx_for_single_i},~\eqref{eq:addive_approx_1},~\eqref{eq:lower_bound_singular} and~\eqref{eq:prop_D}, we obtain that for every $i \in [d]$,
\begin{equation}
\label{eq:additive_almost_done}
\sigma_d^p \leq \norm{\term{A - A_k}e_i}_p^p \leq d^{\frac{p}{2}} \sigma_{k+1}^p.
\end{equation}

Theorem~\ref{thm:additive} follows by summing~\eqref{eq:additive_almost_done} over every $i \in [d]$.
\end{proof}

\subsection{Proof of Theorem~\ref{coro:additive}}

\begin{theorem}[Variant of Theorem 10~\cite{clarkson2016fast}]
\label{thm:clarkson}
For any $A \in \REAL^{n \times d}$ of rank $d$, one can compute an invertible matrix $R \in \REAL^{d \times d}$ and a matrix $U = A R^{-1}$ such that
\[
\norm{Rx}_2 \leq \norm{Ax}_p \leq d \term{d^3 + d^2 \log{n}}^{\abs{1/p - 1/2}} \norm{Rx}_2,
\]
holds with probability at least $1 - \frac{1}{n}$, where $R$ can be computed in time $O\term{nd\log{n}}$.
\end{theorem}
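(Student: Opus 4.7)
The plan is to follow the Clarkson--Woodruff style pipeline for fast $\ell_p$ sketching. The strategy has three stages: (i) use a fast oblivious $\ell_2$ subspace embedding to compress the row dimension from $n$ down to $\mathrm{poly}(d)$, (ii) convert the $\ell_2$ geometry of the sketched matrix into an $\ell_p$/$\ell_2$ comparison using either $p$-stable variables (for $p\in[1,2)$) or exponential/Lewis-weight row sampling (for $p\geq 2$), and (iii) extract $R$ by a QR-type decomposition of the resulting small matrix and set $U := A R^{-1}$.

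First I would take $S \in \REAL^{m \times n}$ to be a sparse OSNAP/CountSketch embedding with $m = O(d^2)$ rows, which can be applied in $O(\nnz{A})$ time and guarantees that with high probability $\norm{SAx}_2 = \term{1 \pm \frac{1}{2}} \norm{Ax}_2$ for every $x$ in the column span of $A$. Next I would apply an auxiliary sketch $T$ tailored to the target norm: for $p \in [1,2)$ the rows of $T$ are $p$-stable random variables (Indyk's embedding); for $p>2$ one uses exponential weights together with subsampling according to approximate $\ell_p$ Lewis weights computed from the small matrix $SA$. The composed sketch $\Pi := TS$ then satisfies the two-sided comparison
\[
c_1 \norm{Ax}_p \;\leq\; \norm{\Pi A x}_2 \;\leq\; c_2(d,n,p)\, \norm{Ax}_p,
\]
with $c_2(d,n,p) = d\term{d^3 + d^2\log n}^{\abs{1/p - 1/2}}$. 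Finally, I would compute a thin QR factorization $\Pi A = QR$ where $R \in \REAL^{d\times d}$ is invertible (since $A$ has rank $d$) and set $U := A R^{-1}$. Then $\norm{Rx}_2 = \norm{\Pi A x}_2$, which yields both sides of the stated inequality after rescaling by $1/c_1$ absorbed into $R$.

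The running time is dominated by applying $S$ in $O(\nnz{A}) \leq O(nd)$ time, applying $T$ to the compressed matrix $SA$ in $\mathrm{poly}(d)\cdot O(\log n)$ time, and a $d\times d$ QR in $O(d^3)$ time; the total is $O(nd\log n)$ as required. The failure probability of each sub-sketch can be driven to $O(1/n)$ by standard repetition/median-of-means, which is what produces the extra $\log n$ factor inside the distortion.

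The main obstacle will be the upper bound $\norm{Ax}_p \leq d(d^3 + d^2 \log n)^{\abs{1/p - 1/2}} \norm{Rx}_2$, i.e.\ showing that the sketch does not contract any direction in the $d$-dimensional column span of $A$. The standard route is a net argument: take a $1/2$-net $\mathcal{N}$ on the unit sphere of $\REAL^d$ with $\abs{\mathcal{N}} \leq 5^d$, apply a single-direction tail bound (Chebyshev for $p$-stables, or Bernstein for Lewis-weight sampling) at each net point with failure probability $1/(n \abs{\mathcal{N}})$, and then chain to the whole sphere. The net yields the $d^3$ inside the parenthesis, the union-bounded deviation yields the $\log n$, and the difference $\abs{1/p - 1/2}$ is exactly the exponent governing how badly $\ell_2$ can distort $\ell_p$ after an isotropic embedding, recovered from the moment bounds of the corresponding $p$-stable or exponential variables. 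The lower bound $\norm{Rx}_2 \leq \norm{Ax}_p$ is easier and follows from a one-sided application of the same tail inequalities, so once the distortion bookkeeping is done carefully the theorem is in hand.
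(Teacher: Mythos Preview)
The paper does not give its own proof of this statement: Theorem~\ref{thm:clarkson} is quoted verbatim as a variant of Theorem~10 of \cite{clarkson2016fast} and is used as a black box in the proof of Theorem~\ref{coro:additive}. So there is no in-paper argument to compare your proposal against.

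That said, your outline is in the right spirit (oblivious sparse sketch, then a QR on the small matrix, then a net/union-bound for the uniform guarantee), but one step is technically off. A $p$-stable sketch $T$ does \emph{not} produce an $\ell_2$-to-$\ell_p$ comparison of the form $\norm{T y}_2 \approx \norm{y}_p$; what it gives is an $\ell_p$-to-$\ell_p$ subspace embedding, $\norm{\Pi A x}_p \approx \norm{Ax}_p$, where $\Pi$ has $m = \mathrm{poly}(d,\log n)$ rows. The $\ell_2$ quantity $\norm{Rx}_2$ enters only after you QR-factorize $\Pi A = QR$, via the elementary equivalence between $\norm{\cdot}_2$ and $\norm{\cdot}_p$ in $\REAL^{m}$: $\norm{Rx}_2 = \norm{Q R x}_2$ and $\norm{Q R x}_2$ differs from $\norm{\Pi A x}_p$ by a factor $m^{\abs{1/p - 1/2}}$. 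It is this step, with $m = O(d^3 + d^2\log n)$, that produces the exponent $\abs{1/p - 1/2}$ and the $(d^3 + d^2\log n)$ term in the distortion; your proposal attributes them instead to moment bounds of the stable variables, which is not where they come from. Once you reroute the argument through the $\ell_p$ subspace embedding and the $\ell_p/\ell_2$ comparison in $\REAL^m$, the rest of your plan (including the $O(nd\log n)$ running-time accounting and the $1-1/n$ success probability via union bound over the net) matches the source construction.
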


\additiveapproxcor*
\begin{proof}
The algorithm is described throughout the following proof. Let $R \in \REAL^{d \times d}$ be as defined in Theorem~\ref{thm:clarkson} when plugging $A:= A$ into Theorem~\ref{thm:clarkson}. Let $R = \Tilde{U}DV^T$ be the \emph{SVD} of $R$, $D_k \in [0 ,\infty)^{d \times d}$ be a diagonal matrix where its first $k$ diagonal entries are identical to those of $D$ while the rest of the entries in $D_k$ are set to $0$, and let $\br{\sigma_1,\cdots,\sigma_d}$ be the set of singular values of $D$.

Note that since for every $x \in \REAL^d$, by Theorem~\ref{thm:clarkson} it holds that
\[
\norm{Rx}_2^p \leq \norm{Ax}_p^p d^p \term{d^3 + d^2 \log{n}}^{\abs{1 - p/2}} \norm{Rx}_2^p .
\]

From here, similarly to the proof of Theorem~\ref{thm:additive}, we obtain that
\[
d \sigma_{d}^p \leq \norm{A - A_k}_{p,p}^p \leq d^{1 + p} \term{d^3 + d^2 \log{n}}^{\abs{1 - p/2}} \sigma_{k+1}^p.
\]
\end{proof}

\end{document}